\newcommand{\symd}{\operatorname{Sym}^+_d}
\newcommand{\grsmn}{\mathcal{G}^r_n}
\newcommand{\mnd}{\mathcal{M}}
\newcommand{\Rn}{\mathbb{R}^n}
\newcommand{\X}{\mathcal{X}}
\newcommand{\Hg}{\mathcal{H}_g}
\renewcommand{\vec}[1]{\mathbf{#1}}
\newtheorem{thm}{Theorem}[section]
\newtheorem{lem}[thm]{Lemma}
\newtheorem{cor}[thm]{Corollary}
\newtheorem{mydef}[thm]{Definition}
\begin{document}
\title{\vspace{-0.35cm}Kernel Methods on Riemannian Manifolds with Gaussian RBF Kernels}
%
%
%
%
\author{\vspace{-0.22cm}Sadeep~Jayasumana,~\IEEEmembership{Student Member,~IEEE,}
        Richard~Hartley,~\IEEEmembership{Fellow,~IEEE,}
        Mathieu~Salzmann,~\IEEEmembership{Member,~IEEE,}
        Hongdong~Li,~\IEEEmembership{Member,~IEEE,}
        and~Mehrtash~Harandi,~\IEEEmembership{Member,~IEEE}\vspace{-0.2cm}
\IEEEcompsocitemizethanks{\IEEEcompsocthanksitem The authors are with the College of Engineering and Computer Science, Australian National University, Canberra and NICTA, Canberra.\protect\\
Contact E-mail: sadeep.jayasumana@anu.edu.au
\IEEEcompsocthanksitem NICTA is funded by the Australian Government as represented by the Department of Broadband, Communications and the Digital Economy and the Australian Research Council (ARC) through the ICT Centre of Excellence program.}%
}

\IEEEcompsoctitleabstractindextext{%
\vspace{-0.25cm}
\begin{abstract}
In this paper, we develop an approach to exploiting kernel methods with manifold-valued data. In many computer vision problems, the data can be naturally represented as points on a Riemannian manifold. Due to the non-Euclidean geometry of Riemannian manifolds, usual Euclidean computer vision and machine learning algorithms yield inferior results on such data. In this paper, we define Gaussian radial basis function (RBF)-based positive definite kernels on manifolds that permit us to embed a given manifold with a corresponding metric in a high dimensional reproducing kernel Hilbert space. These kernels make it possible to utilize algorithms developed for linear spaces on nonlinear manifold-valued data.
Since the Gaussian RBF defined with any given metric is not always positive definite, we present a unified framework for analyzing the positive definiteness of the Gaussian RBF on a generic metric space. We then use the proposed framework to identify positive definite kernels on two specific manifolds commonly encountered in computer vision: the Riemannian manifold of symmetric positive definite matrices and the Grassmann manifold, i.e., the Riemannian manifold of linear subspaces of a Euclidean space. We show that many popular algorithms designed for Euclidean spaces, such as support vector machines, discriminant analysis and principal component analysis can be generalized to Riemannian manifolds with the help of such positive definite Gaussian kernels.

\end{abstract}

\vspace{-0.2cm}
\begin{keywords}
Riemannian manifolds, Gaussian RBF kernels, Kernel methods, Positive definite kernels, Symmetric positive definite matrices, Grassmann manifolds.
\end{keywords}
\vspace{-0.1cm}}

\maketitle\vspace{-2cm}

\IEEEdisplaynotcompsoctitleabstractindextext

%

%

\ifCLASSOPTIONcompsoc
  \noindent\raisebox{2\baselineskip}[0pt][0pt]%
  {\parbox{\columnwidth}{\section{Introduction}\label{sec:introduction}%
  \global\everypar=\everypar}}%
  \vspace{-1\baselineskip}\vspace{-\parskip}\par
\else
  \section{Introduction}\label{sec:introduction}\par
\fi
%

%
%
%
%
\IEEEPARstart{M}{athematical} entities that do not form Euclidean spaces but lie on nonlinear manifolds are often encountered in computer vision. Examples include 3D rotation matrices that form the Lie group $SO(3)$, normalized histograms that form the unit $n$-sphere $S^n$, symmetric positive definite (SPD) matrices and linear subspaces of a Euclidean space. Recently, the latter two manifolds have drawn significant attention in the computer vision community due to their widespread applications. SPD matrices, which form a Riemannian manifold when endowed with an appropriate metric~\cite{Pennec06}, are encountered in computer vision in the forms of covariance region descriptors~\cite{Tuzel06, Tuzel08}, diffusion tensors~\cite{Pennec06, Li2012_L2ECM} and structure tensors~\cite{Goh08, CaseiroICCV2011}. Linear subspaces of a Euclidean space, known to form a Riemannian manifold named the Grassmann manifold, are commonly used to model image sets~\cite{Hamm08, Harandi2011} and videos~\cite{Turaga11}.

Since manifolds lack a vector space structure and other Euclidean structures such as norm and inner product, many popular computer vision and machine learning algorithms including support vector machines (SVM), principal component analysis (PCA) and mean-shift clustering cannot be applied in their original forms on manifolds. One way of dealing with this difficulty is to neglect the nonlinear geometry of manifold-valued data and apply Euclidean methods directly. As intuition suggests, this approach often yields poor accuracy and undesirable effects, see, e.g.,~\cite{Arsigny06, Pennec06} for SPD matrices.

When the manifold under consideration is Riemannian, another common approach used to cope with its nonlinearity consists in approximating the manifold-valued data with its projection to a tangent space at a particular point on the manifold, for example, the mean of the data. Such tangent space approximations are often calculated successively as the algorithm proceeds~\cite{Tuzel08}. However, mapping data to a tangent space only yields a first-order approximation of the data that can be distorted, especially in regions far from the origin of the tangent space. Moreover, iteratively mapping back and forth to the tangent spaces significantly increases the computational cost of the algorithm. It is also difficult to choose the origin of the tangent space, which heavily affects the accuracy of this approximation.

In Euclidean spaces, the success of many computer vision algorithms arises from the use of kernel methods~\cite{Shawe-Taylor2004book, Scholkopf2002book}. Therefore, one could think of following the idea of kernel methods in $\Rn$ and embed a manifold in a high dimensional Reproducing Kernel Hilbert Space (RKHS), where linear geometry applies. Many Euclidean algorithms can be directly generalized to an RKHS, which is a vector space that possesses an important structure: the inner product. Such an embedding, however, requires a kernel function defined on the manifold, which, according to Mercer's theorem~\cite{Scholkopf2002book}, should be positive definite. While many positive definite kernels are known for Euclidean spaces, such knowledge remains limited for manifold-valued data.

The Gaussian radial basis function (RBF) kernel $\exp(-\gamma\|\vec{x} - \vec{y}\|^2)$ is perhaps the most popular and versatile kernel in Euclidean spaces. It would therefore seem attractive to generalize this kernel to manifolds by replacing the Euclidean distance in the RBF by a more accurate nonlinear distance measure on the manifold, such as the geodesic distance. However, a kernel formed in this manner is \emph{not} positive definite in general.

In this paper, we aim to generalize the successful and powerful kernel methods to manifold-valued data.  To this end, we analyze the Gaussian RBF kernel on a generic manifold and provide necessary and sufficient conditions for the Gaussian RBF kernel generated by a distance function on any nonlinear manifold to be positive definite. This lets us generalize kernel methods to manifold-valued data while preserving the favorable properties of the original algorithms. 

We apply our theory to analyze the positive definiteness of Gaussian kernels defined on two specific manifolds: the Riemannian manifold of SPD matrices and the Grassmann manifold. Given the resulting positive definite kernels, we discuss different kernel methods on these two manifolds including, kernel SVM, multiple kernel learning (MKL) and kernel PCA. Our experiments on a variety of computer vision tasks, such as pedestrian detection, segmentation, face recognition and action recognition, show that our manifold kernel methods outperform the corresponding Euclidean algorithms that neglect the manifold geometry, as well as other state-of-the-art techniques specifically designed for manifolds.
\vspace{-0.2cm}
\section{Related Work}
In this paper, we focus on the Riemannian manifold of SPD matrices and on the Grassmann manifold. SPD matrices find a variety of applications in computer vision~\cite{Carreira_ECCV2012_SOPooling}. For instance, covariance region descriptors are used in object detection~\cite{Tuzel08}, texture classification~\cite{Tuzel06, Li_ICCV2013_LE}, object tracking, action recognition and human recognition~\cite{Harandi12, Tosato2013PAMI}. Diffusion Tensor Imaging (DTI) was one of the pioneering fields for the development of non-linear algorithms on SPD matrices~\cite{Pennec06, Arsigny06}. In optical flow estimation and motion segmentation, structure tensors are often employed to encode important image features, such as texture and motion~\cite{Goh08, CaseiroICCV2011}. Structure tensors have also been used in single image segmentation~\cite{Malcolm07}.

Grassmann manifolds are widely used to encode image sets and videos for face recognition~\cite{Hamm08, Harandi2011}, activity recognition~\cite{Turaga11, Lui2012}, and motion grouping~\cite{Lui2012}. In image set based face recognition, a set of face images of the same person is represented as a linear subspace, hence a point on a Grassmann manifold. In activity recognition and motion grouping, a subspace is formed either directly from the sequence of images containing a specific action, or from the parameters of a dynamic model obtained from the sequence~\cite{Turaga11}.

In recent years, several optimization algorithms have been proposed for Riemannian manifolds. In particular, LogitBoost for classification on Riemannian manifolds was introduced in \cite{Tuzel08}. This algorithm has the drawbacks of approximating the manifold by tangent spaces and not scaling with the number of training samples due to the heavy use of exponential/logarithmic maps to transit between the manifold and the tangent space, as well as of gradient descent based Karcher mean calculation. Here, our positive definite kernels enable us to use more efficient and accurate classification algorithms on manifolds without requiring tangent space approximations. Furthermore, as shown in \cite{Tosato10, CaseiroCVPR2013_Rolling}, extending existing kernel-free, manifold-based binary classifiers to the multi-class case is not straightforward. In contrast, the kernel-based classifiers on manifolds described in this paper can readily be used in multi-class scenarios.

In~\cite{Goh08}, dimensionality reduction and clustering methods were extended to manifolds by designing Riemannian versions of Laplacian Eigenmaps (LE), Locally Linear Embedding (LLE) and Hessian LLE (HLLE). Clustering was performed after mapping to a low dimensional space which does not necessarily preserve all the information in the original data. Instead, we use our kernels to perform clustering in a higher dimensional RKHS that embeds the manifold of interest.

The use of kernels on SPD matrices has previously been advocated for locality preserving projections~\cite{Harandi12WACV} and sparse coding~\cite{Harandi12}. In the first case, the kernel, derived from the affine-invariant distance, is not positive definite in general~\cite{Harandi12WACV}. In the second case, the kernel is positive definite only for some values of the Gaussian bandwidth parameter $\gamma$~\cite{Harandi12}. For all kernel methods, the optimal choice of $\gamma$ largely depends on the data distribution and hence constraints on $\gamma$ are not desirable. Moreover, many popular automatic model selection methods require $\gamma$ to be continuously variable~\cite{Chapelle02}.

In \cite{Hamm08, Hamm08_nips}, the Projection kernel and its extensions were introduced and employed for classification on Grassmann manifolds. While those kernels are analogous to the linear kernel in Euclidean spaces, our kernels are analogous to the Gaussian RBF kernel. In Euclidean spaces, the Gaussian kernel has proven more powerful and versatile than the linear kernel. As shown in our experiments, this also holds for kernels on manifolds.

Recently, mean-shift clustering with the heat kernel on Riemannian manifolds was introduced~\cite{Caseiro12}. However, due to the mathematical complexity of the kernel function, computing the exact kernel is not tractable and hence only an approximation of the true kernel was used. Parallel to our work, kernels on SPD matrices and on Grassmann manifolds were used in~\cite{Vemulapalli2013}, albeit without explicit proof of their positive definiteness. In contrast, in this paper, we introduce a unified framework for analyzing the positive definiteness of the Gaussian kernel defined on any manifold and use this framework to identify provably positive definite kernels on the manifold of SPD matrices and on the Grassmann manifold.

Other than for satisfying Mercer's theorem to generate a valid RKHS, positive definiteness of the kernel is a required condition for the convergence of many kernel based algorithms. For instance, the Support Vector Machine (SVM) learning problem is convex only when the kernel is positive definite~\cite {Platt99}. Similarly, positive definiteness of all participating kernels is required to guarantee the convexity in Multiple Kernel Learning (MKL)~\cite{Varma07}. Although theories have been proposed to exploit non-positive definite kernels~\cite{Ong04, Wu05ananalysis}, they have not experienced widespread success. Many of these methods first enforce positive definiteness of the kernel matrix by flipping or shifting its negative eigenvalues~\cite{Wu05ananalysis}. As a consequence, they result in a distortion of information and become inapplicable with large size kernels, which are not uncommon in learning problems.

It is important to note the difference between this work and manifold-learning methods such as~\cite{BelkinNiyogi_ML_04}. We work with data sampled from a manifold whose geometry is well known. In contrast, manifold-learning methods attempt to learn the structure of an underlying unknown manifold from data samples. Furthermore, those methods often assume that noise-free data samples lie on a manifold from which noise push them away. In our study, data points, regardless of their noise content, always lie on the mathematically well-defined manifold.

\vspace{-0.1cm}
\section{Manifolds in Computer Vision}

A \emph{topological manifold}, generally referred to as simply a \emph{manifold}, is a topological space that is locally homeomorphic to the $n$-dimensional Euclidean space $\Rn$, for some $n$. Here, $n$ is referred to as the dimensionality of the manifold. A \emph{differentiable manifold} is a topological manifold that has a globally defined differential structure. The tangent space at a given point on a differentiable manifold is a vector space that consists of the tangent vectors of all possible curves passing through the point.

A \emph{Riemannian manifold} is a differentiable manifold equipped with a smoothly varying inner product on each tangent space. The family of inner products on all tangent spaces is known as the \emph{Riemannian metric} of the manifold. It enables us to define various geometric notions on the manifold such as the angle between two curves and the length of a curve. The \textit{geodesic distance} between two points on the manifold is defined as the length of the shortest curve connecting the two points. Such shortest curves are known as \textit{geodesics} and are analogous to straight lines in $\mathbb{R}^n$.

The geodesic distance induced by the Riemannian metric is the most natural measure of dissimilarity between two points lying on a Riemannian manifold. However, in practice, many other nonlinear distances or \emph{metrics} which do not necessarily arise from Riemannian metrics can also be useful for measuring dissimilarity on manifolds. It is worth noting that the term \emph{Riemannian metric} refers to a family of inner products while the term \emph{metric} refers to a distance function that satisfies the four metric axioms. A nonempty set endowed with a metric is known as a \emph{metric space} which is a more abstract space than a Riemannian manifold.

In the following, we discuss two important Riemannian manifolds commonly found in computer vision.

\vspace{-0.2cm}
\subsection{The Riemannian Manifold of SPD Matrices}
A $d \times d$, real Symmetric Positive Definite (SPD) matrix $S$ has the property: $\vec{x}^TS\vec{x} > 0$ for all nonzero $\vec{x} \in \mathbb{R}^d$. The space of $d \times d$ SPD matrices, which we denote by $\symd$, is clearly not a vector space since an SPD matrix when multiplied by a negative scalar is no longer SPD. Instead, $\symd$ forms a convex cone in the $d^2$-dimensional Euclidean space.

The geometry of the space $\symd$ is best explained with a Riemannian metric which induces an infinite distance between an SPD matrix and a non-SPD matrix~\cite{Pennec06, Arsign05TechReport}. Therefore, the geodesic distance induced by such a Riemannian metric is a more accurate distance measure on $\symd$ than the Euclidean distance in the $d^2$-dimensional Euclidean space it is embedded in. Two popular Riemannian metrics proposed on $\symd$ are the affine-invariant Riemannian metric~\cite{Pennec06} and the log-Euclidean Riemannian metric~\cite{Arsign05TechReport, Arsigny06}. They result in the affine-invariant geodesic distance and the log-Euclidean geodesic distance, respectively. These two distances are so far the most widely used metrics on $\symd$.

Apart from these two geodesic distances, a number of other metrics have been proposed for $\symd$ to capture its nonlinearity~\cite{Sra12}. For a detailed review of metrics on $\symd$, we refer the reader to~\cite{Dryden09thestatistical}.

\vspace{-0.2cm}
\subsection{The Grassmann Manifold}
A point on the $(n, r)$ Grassmann manifold, where $n > r$, is an $r$-dimensional linear subspace of the $n$-dimensional Euclidean space. Such a point is generally represented by an $n \times r$ matrix $Y$ whose columns store an orthonormal basis of the subspace. With this representation, the point on the Grassmann manifold is the subspace spanned by the columns of $Y$ or $\operatorname{span}(Y)$ which we denote by $[Y]$. We use $\grsmn$ to denote the $(n, r)$ Grassmann manifold.

The set of $n \times r$ $(n > r)$ matrices with orthonormal columns forms a
manifold known as the $(n, r)$ \emph{Stiefel manifold}. From its embedding
in the $nr$-dimensional Euclidean space, the $(n, r)$ Stiefel manifold
inherits a canonical Riemannian metric~\cite{Edelman98}. 
Points on
$\grsmn$ are equivalence classes of $n \times r$ matrices with orthonormal
columns, where two matrices are equivalent if their columns span the same
$r$-dimensional subspace. Thus, the orthogonal group ${\cal O}_r$ acts via
isometries (change of orthogonal basis)
on the Stiefel manifold by multiplication on the right, and $\grsmn$ can be identified
as the set of orbits of this action.  Since this action is both
free and proper,  $\grsmn$ forms a manifold, and it is given a Riemannian 
structure by equipping it with
the standard {\em normal} Riemannian metric derived from the metric
on the Stiefel manifold.

A geodesic distance on the Grassmann manifold, called the arc length distance, can be derived from its canonical geometry described above. The arc length distance between two points on the Grassmann manifold turns out to be the $l_2$ norm of the vector formed by the principal angles between the two subspaces. Several other metrics on this manifold can be derived from the principal angles. We refer the reader to \cite{Edelman98} for more details and properties of different Grassmann metrics.

\section{Hilbert Space Embedding of Manifolds}

An \emph{inner product space} is a vector space equipped with an inner product. A \emph{Hilbert space} is an (often infinite-dimensional) inner product space which is complete with respect to the norm induced by the inner product. A Reproducing Kernel Hilbert Space (RKHS) is a special kind of Hilbert space of functions on some nonempty set $\X$ in which all evaluation functionals are bounded and hence continuous~\cite{Aronszajn1950}. The inner product of an RKHS of functions on $\X$ can be defined by a bivariate function on $\X \times \X$, known as the \emph{reproducing kernel} of the RKHS. 

Many useful computer vision and machine learning algorithms developed for Euclidean spaces depend only on the notion of inner product, which allows us to measure angles and also distances. Therefore, such algorithms can be extended to Hilbert spaces without effort. A notable special case arises with RKHSs where the inner product of the Hilbert space can be evaluated using a kernel function without computing the actual vectors. This concept, known as the \emph{kernel trick}, is commonly utilized in machine learning in the following setting: input data in some $n$-dimensional Euclidean space $\Rn$ are mapped to a high dimensional RKHS where some learning algorithm, which requires only the inner product, is applied. We never need to calculate actual vectors in the RKHS since the learning algorithm only requires the inner product of the RKHS, which can be calculated by means of a kernel function defined on $\Rn \times \Rn$. A variety of algorithms can be used with the kernel trick, such as support vector machines (SVM), principal component analysis (PCA), Fisher discriminant analysis (FDA), $k$-means clustering and ridge regression.

Embedding lower dimensional data in a higher dimensional RKHS is commonly employed with data that lies in a Euclidean space. The theoretical concepts of such embeddings can directly be extended to manifolds. Points on a manifold $\mathcal{M}$ are mapped to elements in a high (possibly infinite) dimensional Hilbert space $\mathcal{H}$, a subspace of the space spanned by real-valued functions\footnote{We limit the discussion to real Hilbert spaces and real-valued kernels, since they are the most useful kind in learning algorithms. However, the theory holds for complex Hilbert spaces and complex-valued kernels as well.} on $\mathcal{M}$. A kernel function $k: (\mathcal{M} \times \mathcal{M}) \to \mathbb{R}$ is used to define the inner product on $\mathcal{H}$, thus making it an RKHS. The technical difficulty in utilizing Hilbert space embeddings with manifold-valued data arises from the fact that, according to Mercer's theorem, the kernel must be positive definite to define a valid RKHS. While many positive definite kernel functions are known for ${\mathbb{R}}^n$, generalizing them to manifolds is not straightforward.

Identifying such positive definite kernel functions on manifolds would, however, be greatly beneficial. Indeed, embedding a manifold in an RKHS has two major advantages: First, the mapping transforms the nonlinear manifold into a (linear) Hilbert space, thus making it possible to utilize algorithms designed for linear spaces with manifold-valued data. Second, as evidenced by the theory of kernel methods in Euclidean spaces, it yields a much richer high-dimensional representation of the original data, making tasks such as classification easier.

In the following we build a framework that enables us to define positive definite kernels on manifolds.

\vspace{-0.3cm}
\section{Theory of Positive and Negative Definite Kernels}
In this section, we present some general results on positive and negative definite kernels. These results will be useful for our derivations in later sections. We start with the definition of real-valued positive and negative definite kernels on a set~\cite{Berg84}. Note that by the term \emph{kernel} we mean a real-valued bivariate function hereafter.

\begin{mydef}
\label{def:posdef}
Let $\mathcal{X}$ be a nonempty set. A kernel $f: (\mathcal{X} \times \mathcal{X}) \to \mathbb{R}$ is called \textbf{positive definite} if it is symmetric (i.e., $f(x, y) = f(y, x)$ for all $x, y \in \mathcal{X}$) and
\vspace{-0.1cm}
$$\sum_{i,j = 1}^{m}c_ic_jf(x_i, x_j) \ge 0$$
for all $m \in \mathbb{N}, \{x_1,\ldots,x_m\} \subseteq \mathcal{X}$ and $\{c_1,..., c_m\} \subseteq \mathbb{R}$. The kernel $f$ is called \textbf{negative definite} if it is symmetric and 
$$\sum_{i,j = 1}^{m}c_ic_jf(x_i, x_j) \le 0$$
for all $m \in \mathbb{N}, \{x_1,\ldots,x_m\} \subseteq \mathcal{X}$ and $\{c_1,..., c_m\} \subseteq \mathbb{R}$ with $\sum_{i = 1}^{m}c_i = 0$.
\end{mydef}

It is important to note the additional constraint on $\sum c_i$ for the negative definite case. Due to this constraint, some authors refer to this latter kind as \emph{conditionally negative definite}. However, in this paper, we stick to the most common terminology used in the literature. 

We next present the following theorem which plays a central role in this paper. It was introduced by Schoenberg in 1938~\cite{Schoenberg1938}, well before the theory of Reproducing Kernel Hilbert Spaces was established in 1950~\cite{Aronszajn1950}.

\begin{thm} \label{thm:sho}
Let $\mathcal{X}$ be a nonempty set and $f : (\mathcal{X} \times \mathcal{X}) \to \mathbb{R}$ be a kernel. The kernel $\exp(-\gamma\,f(x,y))$ is positive definite for all $\gamma > 0$ if and only if $f$ is negative definite.
\end{thm}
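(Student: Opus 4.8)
The plan is to prove both implications of the equivalence, relying on two standard facts about positive definite kernels that I would establish first: (i) the Schur product theorem, that the pointwise product of two positive definite kernels is again positive definite, and (ii) the elementary observation that a rank-one kernel $u(x)u(y)$ is positive definite, since $\sum_{i,j} c_i c_j u(x_i) u(x_j) = \left(\sum_i c_i u(x_i)\right)^2 \ge 0$. From these two facts it follows that $\exp(g)$ is positive definite whenever $g$ is, because $\exp(g) = \sum_{n\ge 0} g^n/n!$ is a pointwise limit of nonnegative combinations of $n$-fold Schur powers of $g$, and pointwise limits of positive definite kernels are positive definite.

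For the necessity direction (``only if''), suppose $\exp(-\gamma f)$ is positive definite for every $\gamma > 0$. First I would note that $1 - \exp(-\gamma f(x,y))$ is negative definite: it is symmetric, and for any coefficients with $\sum_i c_i = 0$ the constant term contributes $\left(\sum_i c_i\right)^2 = 0$, leaving $-\sum_{i,j} c_i c_j \exp(-\gamma f(x_i,x_j)) \le 0$ by positive definiteness. Dividing by $\gamma > 0$ preserves negative definiteness, and since $\bigl(1 - \exp(-\gamma t)\bigr)/\gamma \to t$ as $\gamma \to 0^+$, the kernel $f$ is a pointwise limit of the negative definite kernels $\bigl(1 - \exp(-\gamma f)\bigr)/\gamma$. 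As negative definiteness is preserved under pointwise limits, $f$ is negative definite.

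For the sufficiency direction (``if''), suppose $f$ is negative definite. I would fix an arbitrary basepoint $x_0 \in \mathcal{X}$ and introduce the auxiliary kernel $g(x,y) := f(x,x_0) + f(x_0,y) - f(x,y) - f(x_0,x_0)$. The key lemma is that $g$ is positive definite; this is verified by expanding $\sum_{i,j} c_i c_j g(x_i,x_j)$ and comparing it with the negative definite inequality for $f$ applied to the augmented system $\{x_0, x_1, \ldots, x_m\}$ with the extra coefficient $c_0 = -\sum_i c_i$, which has zero sum and hence is admissible in the definition of negative definiteness. Rewriting $-\gamma f(x,y) = -\gamma\bigl(f(x,x_0) + f(x_0,y) - f(x_0,x_0)\bigr) + \gamma\, g(x,y)$ then factors the Gaussian kernel as $\exp(-\gamma f(x,y)) = u(x)\,u(y)\,\exp(\gamma\, g(x,y))$ with $u(x) = \exp\!\bigl(-\gamma f(x,x_0) + \tfrac{\gamma}{2} f(x_0,x_0)\bigr)$. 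The first factor is the rank-one positive definite kernel from (ii), and the second is positive definite because $\gamma g$ is positive definite and $\exp$ of a positive definite kernel is positive definite. The Schur product theorem then makes the product positive definite, for every $\gamma > 0$.

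The main obstacle I anticipate is the sufficiency direction, and specifically the construction and verification of the auxiliary kernel $g$: getting the bookkeeping right in the ``add a point with compensating coefficient $-\sum_i c_i$'' argument is the delicate step, since it is precisely the zero-sum constraint in the definition of negative definiteness that must be exploited to convert the conditional inequality for $f$ into an unconditional one for $g$. By comparison, the necessity direction and the closure properties of positive definite kernels are routine once the Schur product theorem is in hand.
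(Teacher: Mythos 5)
Your proposal is correct and takes essentially the same route as the paper's source: the paper does not prove Theorem~\ref{thm:sho} itself but simply defers to Theorem 3.2.2 of \cite{Berg84}, and your argument --- the basepoint-shift kernel $g(x,y)=f(x,x_0)+f(x_0,y)-f(x,y)-f(x_0,x_0)$ shown positive definite via the augmented zero-sum system with $c_0=-\sum_i c_i$, the factorization $\exp(-\gamma f(x,y))=u(x)\,u(y)\,\exp(\gamma g(x,y))$, the Schur product theorem, and the $\gamma\to 0^+$ limit for necessity --- is precisely the classical Schoenberg proof given in that reference. Incidentally, your necessity step (negative definiteness of $1-\exp(-\gamma f)$ followed by the pointwise limit of $(1-\exp(-\gamma f))/\gamma$) is the same device the paper itself employs in its proof of Theorem~\ref{thm:new}.
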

\begin{proof}
We refer the reader to Theorem 3.2.2 of \cite{Berg84} for a detailed proof of this theorem.
\end{proof}

Note that this theorem describes Gaussian RBF--like exponential kernels that are positive definite for all $\gamma > 0$. One might also be interested in exponential kernels that are positive definite for only some values of $\gamma$. Such kernels, for instance, were exploited in \cite{Harandi12}. To our knowledge, there is no general result characterizing this kind of kernels. However, the following result can be obtained from the above theorem.

\begin{thm} \label{thm:new}
Let $\mathcal{X}$ be a nonempty set and $f : (\mathcal{X} \times \mathcal{X}) \to \mathbb{R}$ be a kernel. If the kernel $\exp(-\gamma\,f(x,y))$ is positive definite for all $\gamma \in (0, \delta)$ for some $\delta > 0$, then it is positive definite for all $\gamma > 0$.
\end{thm}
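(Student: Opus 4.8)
The plan is to reduce the statement to Schoenberg's theorem (Theorem~\ref{thm:sho}) by showing that the hypothesis alone already forces $f$ to be negative definite. Once $f$ is known to be negative definite, Theorem~\ref{thm:sho} immediately yields that $\exp(-\gamma\,f(x,y))$ is positive definite for \emph{all} $\gamma > 0$, which is exactly the desired conclusion. So the entire argument hinges on extracting negative definiteness of $f$ from the assumed positive definiteness of $\exp(-\gamma\,f)$ on the small interval $(0,\delta)$.

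First I would dispose of symmetry. Since $\exp(-\gamma\,f(x,y))$ is positive definite for $\gamma \in (0,\delta)$, it is symmetric by Definition~\ref{def:posdef}, so $\exp(-\gamma\,f(x,y)) = \exp(-\gamma\,f(y,x))$ for all $x,y$; taking logarithms gives $f(x,y) = f(y,x)$, so $f$ is symmetric.

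The heart of the proof is a first-order (derivative-at-$\gamma=0$) argument. Fix $m \in \mathbb{N}$, points $x_1,\ldots,x_m$, and reals $c_1,\ldots,c_m$ with $\sum_i c_i = 0$. The key observation is that the constraint $\sum_i c_i = 0$ makes $\sum_{i,j} c_i c_j = (\sum_i c_i)^2 = 0$, which lets me subtract the constant kernel $1$ for free:
$$\sum_{i,j=1}^{m} c_i c_j\,\exp(-\gamma\,f(x_i,x_j)) = \sum_{i,j=1}^{m} c_i c_j\,\bigl(\exp(-\gamma\,f(x_i,x_j)) - 1\bigr).$$
For each $\gamma \in (0,\delta)$ the left-hand side is nonnegative by positive definiteness. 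Dividing by $\gamma > 0$ and using $\bigl(\exp(-\gamma t)-1\bigr)/\gamma \to -t$ as $\gamma \to 0^+$ for each fixed value $t = f(x_i,x_j)$ (the limit passes through the finite sum), I obtain
$$0 \le \lim_{\gamma \to 0^+}\frac{1}{\gamma}\sum_{i,j=1}^{m} c_i c_j\,\exp(-\gamma\,f(x_i,x_j)) = -\sum_{i,j=1}^{m} c_i c_j\,f(x_i,x_j).$$
Hence $\sum_{i,j} c_i c_j\,f(x_i,x_j) \le 0$, and together with symmetry this shows $f$ is negative definite; applying Theorem~\ref{thm:sho} finishes the proof.

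The argument involves no serious analytic obstacle, since the only limit is through a finite sum and is therefore automatic. The one point that genuinely carries the content is the use of the defining constraint $\sum_i c_i = 0$ to annihilate the zeroth-order term (the constant $1$): without that subtraction, dividing by $\gamma$ and letting $\gamma \to 0^+$ would merely recover $\sum_{i,j} c_i c_j = 0 \ge 0$, which is vacuous. The real insight is thus that negative definiteness of $f$ is precisely the first-order condition encoded in the positive definiteness of $\exp(-\gamma\,f)$ for arbitrarily small $\gamma$, so even an arbitrarily short interval $(0,\delta)$ suffices to pin it down, after which Schoenberg's theorem propagates positive definiteness to the whole range $\gamma > 0$.
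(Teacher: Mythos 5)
Your proof is correct and takes essentially the same route as the paper: the paper likewise uses the constraint $\sum_i c_i = 0$ to note that $1 - \exp(-\gamma\,f)$ is negative definite for $\gamma \in (0,\delta)$, takes the pointwise limit $\lim_{\gamma \to 0^+}\bigl(1 - \exp(-\gamma\,f)\bigr)/\gamma = f$ to conclude that $f$ is negative definite, and then invokes Theorem~\ref{thm:sho}. Your write-up simply unpacks this limit argument at the level of the finite sums in Definition~\ref{def:posdef} (and makes the symmetry step explicit), which is a presentational difference, not a different proof.
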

\begin{proof}
If $\exp(-\gamma\,f(x,y))$ is positive definite for all $\gamma \in (0, \delta)$, it directly follows from Definition~\ref{def:posdef} that $1 - \exp(-\gamma\,f(x,y))$ is negative definite for all $\gamma \in (0, \delta)$. Therefore, the pointwise limit
$$
\lim_{\gamma \to 0^+} \frac{1 - \exp(-\gamma\,f(x,y))}{\gamma} = f(x, y)
$$
is also negative definite. Now, since $f(x, y)$ is negative definite, it follows from Theorem~\ref{thm:sho} that $\exp(-\gamma\,f(x,y))$ is positive definite for all $\gamma > 0$.
\end{proof}

Next, we highlight an interesting property of negative definite kernels. It is well known that a positive definite kernel represents the inner product of an RKHS~\cite{Aronszajn1950}. Similarly, a negative definite kernel represents the squared norm of a Hilbert space under some conditions stated by the following theorem.
\begin{thm}
\label{thm:negDefEmbedding}
Let $\mathcal{X}$ be a nonempty set and $f(x,y) : (\mathcal{X} \times \mathcal{X}) \to \mathbb{R}$ be a negative definite kernel. Then, there exists a Hilbert space $\mathcal{H}$ and a mapping $\psi : \mathcal{X} \to \mathcal{H}$ such that,
$$
f(x, y) = \|\psi(x) - \psi(y)\|^2 + h(x) + h(y)
$$
where $h:\mathcal{X} \to \mathbb{R}$ is a function which is nonnegative whenever $f$ is. Furthermore, if $f(x, x) = 0$ for all $x \in \mathcal{X}$, then $h = 0$. 
\end{thm}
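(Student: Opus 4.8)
The plan is to reduce the problem to the already-understood correspondence between positive definite kernels and inner products of Hilbert spaces~\cite{Aronszajn1950}, by manufacturing a positive definite kernel out of the negative definite kernel $f$. Concretely, I would fix an arbitrary base point $x_0 \in \mathcal{X}$ and introduce the centered kernel
$$k(x,y) = \tfrac{1}{2}\bigl[f(x,x_0) + f(x_0,y) - f(x,y) - f(x_0,x_0)\bigr].$$
The motivation is the Euclidean polarization identity $\langle x - x_0, y - x_0\rangle = \tfrac{1}{2}\bigl(\|x - x_0\|^2 + \|y - x_0\|^2 - \|x - y\|^2\bigr)$: if $f$ behaves like a squared distance, then $k$ should behave like the inner product relative to the origin $x_0$, which is exactly the object we want to realize in $\mathcal{H}$.

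The first and central step is to prove that $k$ is positive definite, and this is where the zero-sum constraint in the definition of negative definiteness is exploited. Given points $x_1,\dots,x_m \in \mathcal{X}$ and reals $c_1,\dots,c_m$, I would adjoin the base point by setting $c_0 := -\sum_{i=1}^m c_i$, so that the enlarged family $\{c_0,c_1,\dots,c_m\}$ satisfies $\sum_{i=0}^m c_i = 0$. Expanding $\sum_{i,j=1}^m c_i c_j\, k(x_i,x_j)$ and using the symmetry of $f$, the terms involving $f(\cdot,x_0)$ and $f(x_0,x_0)$ regroup precisely into the enlarged sum, giving the identity $\sum_{i,j=1}^m c_i c_j\, k(x_i,x_j) = -\tfrac{1}{2}\sum_{i,j=0}^m c_i c_j\, f(x_i,x_j)$. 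Since the enlarged coefficients sum to zero and $f$ is negative definite, the right-hand side is nonnegative, so $k$ is positive definite (its symmetry is inherited from $f$). Keeping track of how the adjoined point $x_0$ absorbs the constraint $\sum_i c_i = 0$ is the only delicate piece of bookkeeping, and it is the step I expect to be the main obstacle.

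With $k$ positive definite, I would invoke the embedding of a positive definite kernel as an inner product to obtain a Hilbert space $\mathcal{H}$ and a map $\psi : \mathcal{X} \to \mathcal{H}$ with $k(x,y) = \langle \psi(x), \psi(y)\rangle$. The conclusion then follows from a direct computation: since $\|\psi(x) - \psi(y)\|^2 = k(x,x) - 2k(x,y) + k(y,y)$, substituting the definition of $k$ makes the $f(\cdot,x_0)$ and $f(x_0,x_0)$ contributions cancel and leaves $\|\psi(x) - \psi(y)\|^2 = f(x,y) - \tfrac{1}{2}f(x,x) - \tfrac{1}{2}f(y,y)$. Setting $h(x) := \tfrac{1}{2}f(x,x)$ yields $f(x,y) = \|\psi(x) - \psi(y)\|^2 + h(x) + h(y)$, and the two remaining assertions are immediate from this formula: if $f$ is nonnegative then $h(x) = \tfrac{1}{2}f(x,x) \ge 0$, and if $f(x,x) = 0$ for every $x$ then $h \equiv 0$.
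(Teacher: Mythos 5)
Your proposal is correct: the base-point centering $k(x,y)=\tfrac{1}{2}\bigl[f(x,x_0)+f(x_0,y)-f(x,y)-f(x_0,x_0)\bigr]$, the positive-definiteness argument via the adjoined coefficient $c_0=-\sum_{i=1}^m c_i$ (the identity $\sum_{i,j=1}^m c_ic_j\,k(x_i,x_j) = -\tfrac{1}{2}\sum_{i,j=0}^m c_ic_j\,f(x_i,x_j)$ does hold, using the symmetry of $f$), and the final identification $h(x)=\tfrac{1}{2}f(x,x)$ all check out. The paper gives no argument of its own---it defers to Proposition 3.3.2 of \cite{Berg84}---and what you have written is precisely the classical construction underlying that reference, so you have in effect supplied, correctly and in full, the proof that the paper outsources.
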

\begin{proof}
The proof for a more general version of this theorem can be found in Proposition 3.3.2 of~\cite{Berg84}.
\end{proof}

Now, we state and prove a lemma that will be useful for the proof of our main theorem.
\begin{lem} \label{thm:innerprod}
Let $\mathcal{X}$ be a nonempty set, $\mathcal{V}$ be an inner product space, and $\psi : \mathcal{X} \to \mathcal{V}$ be a function. Then, $f : (\mathcal{X} \times \mathcal{X}) \to \mathbb{R}$ defined by $f(x, y) := {\| \psi(x) - \psi(y) \|}^2_{\mathcal{V}}$ is negative definite.
\end{lem}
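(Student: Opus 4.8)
The goal is to show that $f(x,y) = \|\psi(x) - \psi(y)\|^2_{\mathcal{V}}$ is negative definite. My plan is to work directly from Definition~\ref{def:posdef}: I must verify symmetry and then check that $\sum_{i,j=1}^m c_i c_j f(x_i, x_j) \le 0$ whenever $\sum_{i=1}^m c_i = 0$. Symmetry is immediate since $\|\psi(x) - \psi(y)\|^2 = \|\psi(y) - \psi(x)\|^2$, so the real content is the inequality under the zero-sum constraint.

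First I would expand the squared norm using the inner product structure of $\mathcal{V}$. Writing $\|\psi(x_i) - \psi(x_j)\|^2 = \langle \psi(x_i), \psi(x_i)\rangle - 2\langle \psi(x_i), \psi(x_j)\rangle + \langle \psi(x_j), \psi(x_j)\rangle$, I would substitute this into the double sum $\sum_{i,j} c_i c_j f(x_i, x_j)$ and split it into three pieces. The two diagonal-type terms, $\sum_{i,j} c_i c_j \langle \psi(x_i), \psi(x_i)\rangle$ and $\sum_{i,j} c_i c_j \langle \psi(x_j), \psi(x_j)\rangle$, each factor as a product of a single sum over the free index with $\sum_k c_k$. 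This is precisely where the constraint $\sum_i c_i = 0$ does the work: each of these two terms carries a factor of $\sum_k c_k = 0$ and therefore vanishes.

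What remains is the cross term $-2 \sum_{i,j} c_i c_j \langle \psi(x_i), \psi(x_j)\rangle$. The key observation is that by bilinearity of the inner product this collapses to $-2 \left\langle \sum_i c_i \psi(x_i), \sum_j c_j \psi(x_j)\right\rangle = -2 \left\| \sum_i c_i \psi(x_i) \right\|^2_{\mathcal{V}}$. Since a squared norm is nonnegative, this quantity is $\le 0$, which is exactly the inequality required by the definition of negative definiteness. Combining the vanishing diagonal terms with this nonpositive cross term completes the argument.

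I do not anticipate a genuine obstacle here; the result is essentially a bookkeeping exercise in expanding the inner product and exploiting the zero-sum condition at the right moment. The one point that requires care is being explicit about \emph{where} the constraint $\sum_i c_i = 0$ is used, since it is the constraint (and not positive definiteness of the Gram structure) that kills the diagonal terms and leaves only the manifestly nonpositive cross term. It is also worth noting that this lemma is the natural near-converse to Theorem~\ref{thm:negDefEmbedding}: that theorem produces an embedding $\psi$ from a negative definite kernel, whereas here any squared-distance kernel arising from an embedding is shown to be negative definite.
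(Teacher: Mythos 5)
Your proposal is correct and follows essentially the same route as the paper's own proof: expand $\|\psi(x_i)-\psi(x_j)\|^2_{\mathcal{V}}$ via the inner product, use $\sum_i c_i = 0$ to annihilate the two diagonal terms, and identify the remaining cross term as $-2\bigl\|\sum_i c_i\psi(x_i)\bigr\|^2_{\mathcal{V}} \le 0$. No differences worth noting.
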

\begin{proof}\belowdisplayskip=-12pt
\allowdisplaybreaks
The kernel $f$ is obviously symmetric. Based on Definition~\ref{def:posdef}, we then need to prove that $ \sum_{i,j = 1}^{m}c_ic_jf(x_i, x_j) \le 0 $ for all $m \in \mathbb{N}$, $ \{x_1,\ldots,x_m\} \subseteq \mathcal{X}$ and $\{c_1,..., c_m\} \subseteq \mathbb{R}$ with $\sum_{i = 1}^{m}c_i = 0$. Now,
\begin{align*}
&\sum_{i,j = 1}^{m}c_ic_jf(x_i, x_j) = \sum_{i,j = 1}^{m}c_ic_j{\Big\| \psi(x_i) - \psi(x_j)\Big\|}^2_{\mathcal{V}}\\
&=\sum_{i,j = 1}^{m}c_ic_j{\Big\langle\psi(x_i) - \psi(x_j), \psi(x_i) - \psi(x_j)\Big\rangle}_{\mathcal{V}}&\\
&=\sum_{j = 1}^{m}c_j\sum_{i = 1}^{m}c_i\Big\langle\psi(x_i), \psi(x_i)\Big\rangle_{\mathcal{V}}&\\
&\qquad -2\sum_{i,j = 1}^{m}c_ic_j\Big\langle\psi(x_i), \psi(x_j)\Big\rangle_{\mathcal{V}}\;\\
&\qquad +\sum_{i = 1}^{m}c_i\sum_{j = 1}^{m}c_j\Big\langle\psi(x_j), \psi(x_j)\Big\rangle_{\mathcal{V}}\\
&=- 2\sum_{i,j = 1}^{m}c_ic_j\Big\langle\psi(x_i), \psi(x_j)\Big\rangle_{\mathcal{V}}&\\
&=- 2\left\|\sum_{i= 1}^{m}c_i\psi(x_i)\right\|^2_{\mathcal{V}} \le 0.\vspace{-5cm}
\end{align*}\qedhere
\end{proof}
\vspace{-0.1cm}
\subsection{Test for the Negative Definiteness of a Kernel}
In linear algebra, an $m \times m$ real symmetric matrix $M$ is called \emph{positive semi-definite} if  $\vec{c}^TM\vec{c} \ge 0$ for all $\vec{c} \in \mathbb{R}^m$ and \emph{negative semi-definite} if $\vec{c}^TM\vec{c} \le 0$ for all $\vec{c} \in \mathbb{R}^m$. These conditions are respectively equivalent to $M$ having non-negative eigenvalues and non-positive eigenvalues. Furthermore, a matrix $M$ which has the property $\vec{\hat{c}}^TM\vec{\hat{c}} \le 0$ for all $\vec{\hat{c}} \in \mathbb{R}^m$ with $\sum \hat{c}_i = 0$, where $\hat{c}_i$s are the components of the vector $\vec{\hat{c}}$, is termed \emph{conditionally negative semi-definite}.

Positive (resp. negative) definiteness of a given kernel--a bivariate function--is usually tested by evaluating the positive semi-definiteness (resp. conditionally negative semi-definiteness) of kernel matrices generated with the kernel\footnote{There is an unfortunate confusion of terminology here. A matrix generated by a positive definite kernel is positive semi-definite while a matrix generated by a negative definite kernel is conditionally negative semi-definite.}. Although such a test is not always conclusive if it passes, it is particularly useful to identify non-positive definite and non-negative definite kernels. Given a set of points ${\{x_i\}}_{i = 1}^m \subseteq \mathcal{X}$ and a kernel $k : (\mathcal{X} \times \mathcal{X})\to \mathbb{R}$, the kernel matrix $K$ of the given points has entries $K_{ij} = k(x_i, x_j)$. Any kernel matrix generated by a positive (resp. negative) definite kernel must be positive semi-definite (resp. conditionally negative semi-definite). As noted above, it is straightforward to check the positive semi-definiteness of a kernel matrix $K$ by checking its eigenvalues. However, due to the additional constraint that $\sum \hat{c}_i = 0$, checking the conditionally negative semi-definiteness is not straightforward. We therefore suggest the following procedure.

Let $P = I_m - \frac{1}{m}\vec{1}_m\vec{1}_m^T$, where $I_m$ is the $m \times m$ identity matrix and $\vec{1}_m$ is the $m$-vector of ones. Any given $\vec{\hat{c}} \in \mathbb{R}^m$ with $\sum \hat{c}_i = 0$ can be written as $\vec{\hat{c}} = P\vec{c}$ for some $\vec{c} \in \mathbb{R}^m$. Therefore, the condition $\vec{\hat{c}}^TM\vec{\hat{c}} \le 0$ is equivalent to $\vec{c}^T PMP \vec{c} \le 0$. Hence, we conclude that an $m \times m$ matrix $M$ is conditionally negative semi-definite if and only if $PMP$ is negative semi-definite (i.e., has non-positive eigenvalues). This gives a convenient test for the conditionally negative semi-definiteness of a matrix, which in turn is useful to evaluate the negative definiteness of a given kernel.

\section{Kernels on Manifolds}
\label{sec:kersOnManifolds}
A number of well-known kernels exist for $\Rn$ including the linear kernel, polynomial kernels and the Gaussian RBF kernel. The key challenge in generalizing kernel methods from Euclidean spaces to manifolds lies in defining appropriate positive definite kernels on the manifold. There is no straightforward way to generalize Euclidean kernels such as the linear kernel and polynomial kernels to nonlinear manifolds, since these kernels depend on the linear geometry of $\Rn$. However, we show that the popular Gaussian RBF kernel can be generalized to manifolds under certain conditions.

In this section, we first introduce a general theorem that provides necessary and sufficient conditions to define a positive definite Gaussian RBF kernel on a given manifold and then show that some popular metrics on $\symd$ and $\grsmn$ yield positive definite Gaussian RBFs on the respective manifolds.

\subsection{The Gaussian RBF Kernel on Metric Spaces}
The Gaussian RBF kernel has proven very effective in Euclidean spaces for a variety of kernel-based algorithms. It maps the data points to an infinite dimensional Hilbert space, which, intuitively, yields a very rich representation of the data. In $\mathbb{R}^n$, the Gaussian kernel can be expressed as $k_G(\mathbf{x},\mathbf{y}) := \exp(-\gamma\|\mathbf{x}-\mathbf{y}\|^2)$, which makes use of the Euclidean distance between two data points $\mathbf{x}$ and $\mathbf{y}$. To define a kernel on a manifold, we would like to replace the Euclidean distance by a more accurate distance measure on the manifold. However, not all geodesic distances yield positive definite kernels. For example, in the case of the unit $n$-sphere embedded in $\mathbb{R}^{n+1}$, $\exp(-\gamma \,d_g^2(x, y))$, where $d_g$ is the usual geodesic distance on the manifold, is not positive definite.

We now state our main theorem which provides the necessary and sufficient conditions to obtain a positive definite Gaussian kernel from a given distance function defined on a generic space.

\begin{thm} \label{thm:main}
Let $(M, d)$ be a metric space and define $k : (M \times M) \to \mathbb{R}$ by $k(x,y) := \exp(-\gamma\,d^2(x,y))$. Then, $k$ is a positive definite kernel for all $\gamma > 0$ if and only if there exists an inner product space $\mathcal{V}$ and a function $\psi : M \to \mathcal{V}$ such that $d(x,y) = \left\|\psi(x) - \psi(y)\right\|_{\mathcal{V}}$.
\end{thm}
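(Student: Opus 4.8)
The plan is to prove both implications by leveraging the machinery already established, namely Schoenberg's theorem (Theorem~\ref{thm:sho}), the embedding theorem for negative definite kernels (Theorem~\ref{thm:negDefEmbedding}), and Lemma~\ref{thm:innerprod}. The common pivot for both directions is the negative definiteness of the squared distance $d^2$, which is the quantity appearing in the exponent of $k$.

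For the sufficiency direction ($\Leftarrow$), I would assume such a $\psi$ and $\mathcal{V}$ exist. Then $d^2(x,y) = \|\psi(x) - \psi(y)\|^2_{\mathcal{V}}$, so Lemma~\ref{thm:innerprod} immediately gives that $d^2$ is negative definite. Applying Schoenberg's theorem (Theorem~\ref{thm:sho}) with $f = d^2$ then yields that $k(x,y) = \exp(-\gamma\,d^2(x,y))$ is positive definite for all $\gamma > 0$. This direction is essentially a one-line composition of two earlier results.

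For the necessity direction ($\Rightarrow$), I would start from the hypothesis that $k$ is positive definite for all $\gamma > 0$ and run Schoenberg's theorem in reverse to conclude that $d^2$ is negative definite. The key step is then to invoke Theorem~\ref{thm:negDefEmbedding} with $f = d^2$, producing a Hilbert space $\mathcal{H}$, a map $\psi : M \to \mathcal{H}$, and a function $h$ with $d^2(x,y) = \|\psi(x) - \psi(y)\|^2 + h(x) + h(y)$. The crucial observation---and the part that makes the metric structure matter---is that $d(x,x) = 0$ for every $x \in M$, so $d^2(x,x) = 0$. This triggers the final clause of Theorem~\ref{thm:negDefEmbedding}, forcing $h \equiv 0$. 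Hence $d^2(x,y) = \|\psi(x) - \psi(y)\|^2_{\mathcal{H}}$, and since both sides are nonnegative I can take square roots to obtain $d(x,y) = \|\psi(x) - \psi(y)\|_{\mathcal{H}}$. As a Hilbert space is in particular an inner product space, setting $\mathcal{V} = \mathcal{H}$ completes the proof.

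The main obstacle---or rather the one genuinely load-bearing idea---is recognizing that the metric axiom $d(x,x) = 0$ is precisely what eliminates the nuisance term $h$ in the generic embedding of Theorem~\ref{thm:negDefEmbedding}. Without it, one would only obtain $d^2(x,y) = \|\psi(x) - \psi(y)\|^2 + h(x) + h(y)$, which does not represent $d^2$ as a squared Hilbert-space distance and hence would not yield the clean isometric embedding asserted here. Everything else reduces to a routine application of Schoenberg's equivalence in both directions, together with the harmless step of taking nonnegative square roots.
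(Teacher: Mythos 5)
Your proof is correct and follows essentially the same route as the paper's: both directions pivot on Schoenberg's equivalence (Theorem~\ref{thm:sho}) between positive definiteness of $k$ for all $\gamma>0$ and negative definiteness of $d^2$, with Lemma~\ref{thm:innerprod} handling sufficiency and Theorem~\ref{thm:negDefEmbedding} plus the metric axiom $d(x,x)=0$ handling necessity. Your write-up is merely more explicit than the paper's about why $d(x,x)=0$ kills the term $h$ in the embedding, which is exactly the load-bearing step the paper invokes implicitly.
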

\begin{proof}
We first note that positive definiteness of $k(., .)$ for all $\gamma$ and negative definiteness of $d^2(., .)$ are equivalent conditions according to Theorem~\ref{thm:sho}.

To prove the forward direction of the present theorem, let us first assume that $\mathcal{V}$ and $\psi$ exist such that $d(x,y) = \left\|\psi(x) - \psi(y)\right\|_{\mathcal{V}}$. Then, from Lemma~\ref{thm:innerprod}, $d^2$ is negative definite and hence $k$ is positive definite for all $\gamma$.

On the other hand, if $k$ is positive definite for all $\gamma$, then $d^2$ is negative definite. Furthermore, $d(x, x) = 0$ for all $x \in M$ since $d$ is a metric. Following Theorem~\ref{thm:negDefEmbedding}, then $\mathcal{V}$ and $\psi$ exist such that $d(x,y) = \left\|\psi(x) - \psi(y)\right\|_{\mathcal{V}}$.\qedhere
\end{proof}

\subsection{Geodesic Distances and the Gaussian RBF}
A Riemannian manifold, when considered with its geodesic distance, forms a metric space. Given Theorem~\ref{thm:main}, it is then natural to wonder under which conditions would a geodesic distance on a manifold yield a Gaussian RBF kernel. We now present and prove the following theorem, which answers this question for complete Riemannian manifolds.

\begin{thm}
\label{thm:geoGaussian}
Let $\mnd$ be a complete Riemannian manifold and $d_g$ be the geodesic distance induced by its Riemannian metric. The Gaussian RBF kernel $k_g: (\mnd \times \mnd) \to \mathbb{R} : k_g(x, y) := \exp(-\gamma\,d_g^2(x, y))$ is positive definite for all $\gamma > 0$ if and only if $\mnd$ is isometric (in the Riemannian sense) to some Euclidean space $\Rn$.
\end{thm}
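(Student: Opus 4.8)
The plan is to reduce everything to Theorem~\ref{thm:main}. Since $(\mnd, d_g)$ is a metric space, Theorem~\ref{thm:main} tells us that $k_g$ is positive definite for all $\gamma > 0$ if and only if there exist an inner product space $\mathcal{V}$ and a map $\psi : \mnd \to \mathcal{V}$ with $d_g(x,y) = \|\psi(x) - \psi(y)\|_{\mathcal{V}}$. So the theorem becomes a purely geometric statement: \emph{a complete Riemannian manifold admits a distance-preserving embedding of its geodesic metric into an inner product space if and only if it is Riemannian-isometric to some $\Rn$}. The backward direction is immediate: if $\mnd$ is isometric to $\Rn$, then under that isometry $d_g$ is exactly the Euclidean distance, so taking $\mathcal{V} = \Rn$ and $\psi$ the isometry satisfies the hypothesis of Theorem~\ref{thm:main}, and $k_g$ is positive definite for all $\gamma$.

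For the forward direction I would first show that $\psi$ sends geodesics to straight lines. Given a minimizing geodesic $\gamma$ and three parameter values $s < u < t$ along it, the identity $d_g = \|\psi(\cdot) - \psi(\cdot)\|$ together with $d_g(\gamma(s),\gamma(t)) = (u-s)+(t-u)$ turns the triangle inequality in $\mathcal{V}$ into an equality; since the norm of an inner product space is strictly convex, this forces $\psi(\gamma(u))$ to lie on the segment $[\psi(\gamma(s)),\psi(\gamma(t))]$. Hence each minimizing segment maps to a straight, unit-speed segment in $\mathcal{V}$. Invoking completeness (Hopf--Rinow), every geodesic is locally minimizing on overlapping intervals, so its image is locally affine everywhere and therefore a full straight line traversed at unit speed; in particular $\psi$ is injective and maps every complete geodesic onto an entire line of $\mathcal{V}$.

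Next I would identify the image. Because $\mnd$ is complete, any two points are joined by a minimizing geodesic whose image is the whole segment between the two image points, so $\psi(\mnd)$ is a \emph{convex} subset of $\mathcal{V}$ and $\psi$ is a homeomorphism onto it. Being a convex set that is also an $n$-manifold without boundary, $\psi(\mnd)$ is open in its affine hull $A$, an $n$-dimensional affine subspace isometric to $\Rn$. I would then upgrade ``open convex'' to ``all of $A$'': the image of a small geodesic ball around any point is an open neighborhood in $A$ built as a union of straight segments emanating from the center, so those segment directions span $A$, and by completeness each extends to a full line inside $\psi(\mnd)$. If $\psi(\mnd) \neq A$, a supporting hyperplane at a boundary point would have to contain every such line, contradicting that the directions span $A$; hence $\psi(\mnd) = A \cong \Rn$.

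Finally, $\psi$ is a distance-preserving bijection from $(\mnd, d_g)$ onto $\Rn$ with its Euclidean metric, so by the Myers--Steenrod theorem it is a smooth Riemannian isometry, completing the forward direction. The main obstacle is precisely the step that converts the purely metric datum handed to us by Theorem~\ref{thm:main} into Riemannian rigidity: one must show the abstract map $\psi$ is geometrically rigid (geodesics to lines, image convex and then an entire subspace) \emph{before} any smoothness is available, which is why the triangle-equality/strict-convexity argument and the supporting-hyperplane argument do the real work, with Myers--Steenrod invoked only at the very end to promote the metric isometry to a Riemannian one.
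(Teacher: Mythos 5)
Your proposal is correct and follows essentially the same route as the paper's own proof: reduce to Theorem~\ref{thm:main} to obtain the embedding $\psi$, use completeness (Hopf--Rinow) together with the triangle equality to show that $\psi$ maps geodesics to straight, unit-speed lines, deduce that $\psi(\mnd)$ is convex, and use the indefinite extensibility of geodesics to conclude that the image is a full affine subspace isometric to $\Rn$. If anything, your version is more careful at the final steps---the strict-convexity justification, the supporting-hyperplane argument, the finite-dimensionality of the affine hull, and the Myers--Steenrod upgrade from a metric isometry to a Riemannian one are details the paper's proof passes over silently.
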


\begin{proof}
If $\mnd$ is isometric to some $\Rn$, the geodesic distance on $\mnd$ is simply the Euclidean distance in $\Rn$, which can be trivially shown to yield a positive definite Gaussian RBF kernel by setting $\psi$ in Theorem~\ref{thm:main} to the identity function. 

On the other hand, if $k_g$ is positive definite, from Theorem~\ref{thm:main}, there exists an inner product space $\mathcal{V}_g$ and a function $\psi_g : \mnd \to \mathcal{V}_g$ such that $d_g(x, y) = \|\psi_g(x) - \psi_g(y) \|_{\mathcal{V}_g}$. Let $\Hg$ be the completion of $\mathcal{V}_g$. Therefore, $\Hg$ is a Hilbert space, in which $\mathcal{V}_g$ is dense. 

Now, take any two points $x_0, x_1$ in $\mnd$. Since the manifold is complete, from the Hopf-Rinow theorem, there exists a geodesic $\delta(t)$ joining them with $\delta(0) = x_0$ and $\delta(1) = x_1$, and realizing the geodesic distance. By definition, $\delta(t)$ has a constant speed $d_g(x_0, x_1)$. Therefore, for all $x_t = \delta(t)$ where $t \in [0, 1]$, the following equality holds
$$
d_g(x_0, x_t) + d_g(x_t, x_1) = d_g(x_0, x_1).
$$

This must also be true for images $\psi(x_t)$ in $\Hg$ for $t \in [0,1]$. However, since $\Hg$ is a Hilbert space, this is only possible if all the points $\psi(x_t)$ lie on a straight line in $\Hg$. Let $\psi(\mnd)$ be the range of $\psi$. From the previous argument, for any two points in $\psi(\mnd) \subseteq \Hg$, the straight line segment joining them is also in $\psi(\mnd)$. Therefore, $\psi(\mnd)$ is a convex set in $\Hg$. Now, since $\mnd$ is complete, any geodesic must be extensible indefinitely. Therefore, the corresponding line segment in $\psi(\mnd)$ must also be extensible indefinitely. This proves that $\psi(\mnd)$ is an affine subspace of $\Hg$, which is isometric to $\mathbb{R}^n$, for some $n$. Since $\mnd$ is isometric to $\psi(\mnd)$, this proves that $\mnd$ is isometric to the Euclidean space $\mathbb{R}^n$.
\end{proof}

According to Theorem~\ref{thm:geoGaussian}, it is possible to obtain a positive definite Gaussian kernel from the geodesic distance on a Riemannian manifold only when the manifold is made essentially equivalent to some $\Rn$ by the Riemannian metric that defines the geodesic distance. Although this is possible for some Riemannian manifolds, such as the Riemannian manifold of SPD matrices, for some others, it is theoretically impossible.

In particular, if the manifold is compact, it is impossible to find an isometry between the manifold and $\Rn$, since $\Rn$ is not compact. Therefore, it is not possible to obtain a positive definite Gaussian from the geodesic distance of a compact manifold. In such cases, the best hope is to find a different non-geodesic distance on the manifold that does not differ much from the geodesic distance, but still satisfies the conditions of Theorem~\ref{thm:main}. 

\subsection{Kernels on $\symd$}
\label{sec:kerOnSymd}
We now discuss different metrics on $\symd$ that can be used to define positive definite Gaussian kernels. Since $\symd$ is not compact, as explained in the previous section, there is some hope in finding a geodesic distance on it that also defines a positive definite Gaussian kernel. In this section we show that the log-Euclidean distance, which has been proved to be a geodesic distance on $\symd$~\cite{Arsign05TechReport}, is such a distance.


In the log-Euclidean framework, a geodesic connecting $S_1, S_2 \in \symd$ is defined as $\gamma(t) = \exp((1-t)\log(S_1) + t\log(S_2))$ for $t \in [0, 1]$. The log-Euclidean geodesic distance between $S_1$ and  $S_2$ can be expressed as
\begin{equation}
\label{eq:log_dist}
d_{LE}(S_1, S_2) = {\|\log(S_1) - \log(S_2)\|}_F\;,
\end{equation}
where $\|\cdot\|_F$ denotes the Frobenius matrix norm induced by the Frobenius matrix inner product ${\langle.,.\rangle}_F$.

The log-Euclidean distance has proven an effective distance measure on $\symd$~\cite{Arsigny06, Dryden09thestatistical}. Furthermore, it yields a positive definite Gaussian kernel as stated in the following corollary to Theorem~\ref{thm:main}:

\begin{cor}[Theorem~\ref{thm:main}] \label{cor:leGK}
The \textbf{Log-Euclidean Gaussian kernel} $k_{LE} : (\symd \times \symd) \to \mathbb{R} : k_{LE}(S_1,S_2) := \exp(-\gamma\,d_{LE}^2(S_1,S_2))$, where $d_{LE}(S_1, S_2)$ is the log-Euclidean distance between $S_1$ and $S_2$, is a positive definite kernel for all $\gamma > 0$.
\end{cor}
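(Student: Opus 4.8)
The plan is to recognize the log-Euclidean distance as being precisely of the form demanded by the forward direction of Theorem~\ref{thm:main}, so that the corollary drops out immediately once the right inner product space and embedding map are exhibited. In other words, I would not prove positive definiteness from scratch via Definition~\ref{def:posdef}, but instead reduce everything to the main theorem by identifying the isometric embedding hidden inside formula~\eqref{eq:log_dist}.

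Concretely, first I would take $\mathcal{V}$ to be the vector space of $d \times d$ real symmetric matrices equipped with the Frobenius inner product ${\langle \cdot, \cdot \rangle}_F$; being finite-dimensional, this is an inner product space (indeed a Hilbert space). Next I would define the embedding $\psi : \symd \to \mathcal{V}$ by $\psi(S) := \log(S)$, the principal matrix logarithm. The one structural fact worth stating carefully is that $\psi$ is well-defined and actually lands in $\mathcal{V}$: for any $S \in \symd$ the spectral decomposition $S = U\Lambda U^T$ has strictly positive eigenvalues, so $\log(S) = U\log(\Lambda)U^T$ is a genuine symmetric matrix.

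With these choices, the defining expression~\eqref{eq:log_dist} reads exactly $d_{LE}(S_1,S_2) = {\|\log(S_1) - \log(S_2)\|}_F = {\|\psi(S_1) - \psi(S_2)\|}_{\mathcal{V}}$, which is verbatim the hypothesis of Theorem~\ref{thm:main}. Invoking that theorem then yields that $k_{LE}(S_1,S_2) = \exp(-\gamma\,d_{LE}^2(S_1,S_2))$ is positive definite for all $\gamma > 0$. (Equivalently, one may short-circuit through Lemma~\ref{thm:innerprod} to conclude that $d_{LE}^2$ is negative definite, and then apply Theorem~\ref{thm:sho}; this is the same content.)

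I expect essentially no serious obstacle here, since the corollary is a direct instantiation of the main theorem; the only point that genuinely requires an argument is the well-definedness of $\psi$, i.e.\ that the matrix logarithm carries the open cone $\symd$ into the linear space of symmetric matrices. This is standard: the matrix exponential restricts to a diffeomorphism from the symmetric matrices onto $\symd$, with the principal logarithm as its inverse, so $\psi$ is in fact a bijective isometric embedding. As a sanity check, this also shows via Theorem~\ref{thm:geoGaussian} that $(\symd, d_{LE})$ is isometric to the Euclidean space $\mathbb{R}^{d(d+1)/2}$ of symmetric matrices, consistent with the log-Euclidean metric's flattening of $\symd$.
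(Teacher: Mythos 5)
Your proposal is correct and follows exactly the paper's route: the paper's proof is the one-liner ``Directly follows from Theorem~\ref{thm:main} with the Frobenius matrix inner product,'' and you have simply made the implicit instantiation explicit, namely $\mathcal{V}$ the symmetric matrices with ${\langle \cdot,\cdot\rangle}_F$ and $\psi = \log$, together with the (standard) well-definedness of the matrix logarithm on $\symd$. No gap; your extra remark connecting this to Theorem~\ref{thm:geoGaussian} is a sound consistency check, not a deviation.
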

\begin{proof}Directly follows from Theorem~\ref{thm:main} with the Frobenius matrix inner product.
\end{proof}

A number of other metrics have been proposed for $\symd$~\cite{Dryden09thestatistical}. The definitions and properties of these metrics are summarized in Table~\ref{tbl:metricsOnSymd}. Note that only some of them were derived by considering the Riemannian geometry of the manifold and hence define true geodesic distances. Similarly to the log-Euclidean metric, from Theorem~\ref{thm:main}, it directly follows that the Cholesky and power-Euclidean metrics also define positive definite Gaussian kernels for all values of $\gamma$. Note that some metrics may yield a positive definite Gaussian kernel for some value of $\gamma$ only. This, for instance, was shown in~\cite{Sra12} for the root Stein divergence metric. No such result is known for the affine-invariant metric. Constraints on $\gamma$ are nonetheless undesirable, since one should be able to freely tune $\gamma$ to reflect the data distribution, and automatic model selection algorithms require kernels to be positive definite for continuous values of $\gamma > 0$~\cite{Chapelle02}.

{
\renewcommand{\arraystretch}{1.4}
\begin{table*}[t]
\centering
\begin{tabular}{| >{\centering\arraybackslash}m{3.3cm} | >{\centering\arraybackslash}m{6.5cm} | >{\centering\arraybackslash}m{2.3cm} | >{\centering\arraybackslash}m{3.5cm} |}
\hline    
Metric Name & Formula & Geodesic Distance & Positive Definite Gaussian Kernel for all $\gamma > 0$
\tabularnewline
\hline   
\textbf{Log-Euclidean} & ${\|\log(\mathbf{S}_1) - \log(\mathbf{S}_2)\|}_F$ & \textbf{Yes} & \textbf{Yes} 
\tabularnewline
Affine-Invariant & ${\|\log(\mathbf{S}_1^{-1/2} \mathbf{S}_2 \mathbf{S}_1^{-1/2})\|}_F$ & Yes & No
\tabularnewline
Cholesky & ${\| \operatorname{chol}(\mathbf{S}_1) -  \operatorname{chol}(\mathbf{S}_2)\|}_F$ & No & Yes
\tabularnewline
Power-Euclidean & ${ \frac{1}{\alpha} \| \mathbf{S}_1^\alpha - \mathbf{S}_2^\alpha \|}_F$ & No & Yes
\tabularnewline
Root Stein Divergence & $ {\left[ {\log \operatorname{det} \left( \frac{1}{2}\mathbf{S}_1 + \frac{1}{2}\mathbf{S}_2 \right) - \frac{1}{2} \log \operatorname{det} (\mathbf{S}_1 \mathbf{S}_2)} \right]}^{1/2} $ & No & No
\tabularnewline
\hline
\end{tabular}
\vspace{-0.1cm}
\caption{\small {\bf Properties of different metrics on $\symd$.} We analyze the positive definiteness of Gaussian kernels generated by different metrics. Theorem~\ref{thm:main} applies to the metrics claimed to generate positive definite Gaussian kernels. For the other metrics, examples of non-positive definite Gaussian kernels exist.}
\label{tbl:metricsOnSymd}
\vspace{-0.2cm}
\end{table*}
}

{
\renewcommand{\arraystretch}{1.4}
\begin{table*}[t]
\centering
\begin{tabular}{| >{\centering\arraybackslash}m{3.3cm} | >{\centering\arraybackslash}m{6.5cm} | >{\centering\arraybackslash}m{2.3cm} | >{\centering\arraybackslash}m{3.5cm} |}
\hline    
Metric Name & Formula & Geodesic Distance & Positive Definite Gaussian Kernel for all $\gamma > 0$
\tabularnewline
\hline   
\textbf{Projection} & $2^{-1/2}{\|Y_1Y_1^T - Y_2Y_2^T\|}_F = (\sum_i\sin^2\theta_i)^{1/2}$ & No & \textbf{Yes}
\tabularnewline
Arc length & $(\sum_i \theta_i^2)^{1/2}$ & Yes & No
\tabularnewline
Fubini-Study & $\arccos|\det(Y_1^TY_2)| = \arccos(\prod_i{\cos\theta_i)}$ & No & No
\tabularnewline
Chordal 2-norm & ${\|Y_1U - Y_2V\|}_2 = 2\max_i\sin \frac{1}{2}\theta_i$& No & No
\tabularnewline
Chordal F-norm & ${\|Y_1U - Y_2V\|}_F = 2(\sum_i\sin^2 \frac{1}{2}\theta_i)^{1/2}$ & No & No
\tabularnewline
\hline
\end{tabular}
\vspace{-0.1cm}
\caption{\small {\bf Properties of different metrics on $\grsmn$.} Here, $USV^T$ is the singular value decomposition of $Y_1^TY_2$, whereas $\theta_i$s are the the principal angles between the two subspaces $[Y_1]$ and $[Y_2]$. Our claims on positive definiteness are supported by either Theorem~\ref{thm:main}, or counter-examples.}
\label{tbl:metricsOnGrsmn}
\vspace{-0.5cm}
\end{table*}
}

\subsection{Kernels on $\grsmn$}
\label{sec:kerOnGrsmn}
Similarly to $\symd$, different metrics can be defined on $\grsmn$. Many of these metrics are related to the principal angles between two subspaces. Given two $n \times r$ matrices $Y_1$ and $Y_2$ with orthonormal columns, representing two points on $\grsmn$, the principal angles between the corresponding subspaces are obtained from the singular value decomposition of $Y_1^TY_2$~\cite{Edelman98}. More specifically, if $USV^T$ is the singular value decomposition of $Y_1^TY_2$, then the entries of the diagonal matrix $S$ are the cosines of the principal angles between $[Y_1]$ and $[Y_2]$. Let $\{\theta_i\}_{i=1}^r$ represent those principal angles. Then, the geodesic distance derived from the canonical geometry of the Grassmann manifold, called the arc length, is given by $(\sum_i \theta_i^2)^{1/2}$~\cite{Edelman98}. Unfortunately, as can be shown with a counter-example, this distance, when squared, is not negative definite and hence does not yield a positive definite Gaussian for all $\gamma > 0$. Given Theorem~\ref{thm:geoGaussian} and the discussion that followed, this not a surprising result: Since the Grassmann manifold is a compact manifold, it is not possible to find a geodesic distance that also yields a positive definite Gaussian for all $\gamma > 0$. 

Nevertheless, there exists another widely used metric on the Grassmann manifold, namely the projection metric, which gives rise to a positive definite Gaussian kernel. The projection distance between two subspaces $[Y_1], [Y_2]$ is given by

\begin{equation}
\label{eq:projDist}
d_{P}([Y_1], [Y_2]) = 2^{-1/2}{\|Y_1Y_1^T - Y_2Y_2^T\|}_F.
\end{equation}

We now formally introduce this Gaussian RBF kernel on the Grassmann manifold.

\begin{cor}[Theorem~\ref{thm:main}] \label{cor:projGK}
The \textbf{Projection Gaussian kernel} $k_{P} : (\grsmn \times \grsmn) \to \mathbb{R} : k_{P}([Y_1],[Y_2]) := \exp(-\gamma\,d_{P}^2([Y_1],[Y_2]))$, where $d_{P}([Y_1], [Y_2])$ is the projection distance between $[Y_1]$ and $[Y_2]$, is a positive definite kernel for all $\gamma > 0$.
\end{cor}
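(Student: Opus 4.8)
The plan is to apply Theorem~\ref{thm:main} directly, by exhibiting an explicit isometric embedding of $\grsmn$, equipped with the projection distance, into an inner product space. Concretely, I would take $\mathcal{V}$ to be the space of $n \times n$ real symmetric matrices endowed with the Frobenius inner product $\langle A, B\rangle_F = \tr(A^TB)$, and define $\psi : \grsmn \to \mathcal{V}$ by $\psi([Y]) := 2^{-1/2}\,YY^T$. With this choice, the defining identity of Theorem~\ref{thm:main} reads
$$
\|\psi([Y_1]) - \psi([Y_2])\|_F = 2^{-1/2}\|Y_1Y_1^T - Y_2Y_2^T\|_F = d_P([Y_1],[Y_2]),
$$
which is exactly Eq.~(\ref{eq:projDist}). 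Granting that $\psi$ is a genuine function on $\grsmn$, Theorem~\ref{thm:main} immediately yields that $k_P$ is positive definite for all $\gamma > 0$; equivalently, Lemma~\ref{thm:innerprod} shows $d_P^2$ is negative definite and Theorem~\ref{thm:sho} then finishes.

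The step that genuinely requires justification, and the main obstacle, is verifying that $\psi$ is well defined, i.e., that its value depends only on the subspace $[Y]$ and not on the particular orthonormal matrix $Y$ chosen to represent it. Here I would recall that, because the columns of $Y$ are orthonormal, $YY^T$ is precisely the matrix of the orthogonal projection of $\Rn$ onto $\operatorname{span}(Y)$. Any two representatives $Y$ and $YR$ of the same point differ by some $R \in \mathcal{O}_r$, whence $(YR)(YR)^T = YRR^TY^T = YY^T$, so the projector, and therefore $\psi$, is invariant under change of orthonormal basis. This is exactly the reason the metric is called the \emph{projection} distance, and it is what makes the embedding legitimate; I expect this to be the one point meriting explicit comment.

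Finally, I would note that $\psi$ lands in the finite-dimensional inner product space of symmetric matrices, so no completeness or density argument is needed, and the statement is a clean corollary rather than requiring the more delicate geodesic analysis of Theorem~\ref{thm:geoGaussian}. The only remaining check, that the displayed norm equals the Frobenius expression, is the routine expansion $\|\psi([Y_1]) - \psi([Y_2])\|_F = 2^{-1/2}\|Y_1Y_1^T - Y_2Y_2^T\|_F$ obtained by pulling the scalar $2^{-1/2}$ out of the norm, which I would not belabor.
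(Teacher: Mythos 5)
Your proposal is correct and follows essentially the same route as the paper: the paper's proof is the one-line invocation of Theorem~\ref{thm:main} with the Frobenius matrix inner product, for which the implicit embedding is exactly your $\psi([Y]) = 2^{-1/2}\,YY^T$. Your additional check that $\psi$ is well defined on equivalence classes (invariance under right multiplication by $\mathcal{O}_r$) is a detail the paper leaves implicit, and it is a worthwhile one to make explicit.
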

\begin{proof} Follows from Theorem~\ref{thm:main} with the Frobenius matrix inner product.
\end{proof}

As shown in Table~\ref{tbl:metricsOnGrsmn}, none of the other popular metrics on Grassmann manifolds have this property. Counter-examples exist for these metrics to support our claims.

\subsubsection{Calculation of the Projection Gaussian Kernel}
To calculate the kernel introduced in Corollary~\ref{cor:projGK}, one needs to calculate the squared projection distance given by $d_{P}^2([Y_1], [Y_2]) = 2^{-1}{\|Y_1Y_1^T - Y_2Y_2^T\|}_F^2$ or $d_{P}^2([Y_1], [Y_2]) = (\sum_i\sin^2\theta_i)^{1/2}$ where both $Y_1$ and $Y_2$ are $n \times r$ matrices. The second formula requires a singular value decomposition to find the $\theta_i$s, which we would like to avoid due to its computational complexity. The first formula requires calculating the Frobenius $l_2$ distance between $Y_1Y_1^T$ and $Y_2Y_2^T$, both of which are $n \times n$ matrices. For many applications, $n$ is quite large, and therefore, the direct computation of $\|Y_1Y_1^T - Y_2Y_2^T\|_F^2$ is inefficient.
As a consequence, the cost of computing $d_{P}^2$ can be reduced with the following equation, which makes use of some properties of the matrix trace and of the fact that $Y_1^TY_1 = Y_2^TY_2 = I_r$:
\begin{align*}
&d_{P}^2([Y_1], [Y_2]) = 2^{-1}{\|Y_1Y_1^T - Y_2Y_2^T\|}_F^2= r - \|Y_1^TY_2\|_F^2 \;.
\end{align*}
This implies that it is sufficient to compute only the Frobenius norm of $Y_1^TY_2$, an $r \times r$ matrix, to calculate $d_{P}^2([Y_1], [Y_2])$.

\section{Kernel-based Algorithms on Manifolds}
\label{sec:algos}
A major advantage of being able to compute positive definite kernels on manifolds is that it directly allows us to make use of algorithms developed for $\mathbb{R}^n$, while still accounting for the geometry of the manifold. In this section, we discuss the use of five kernel-based algorithms on manifolds. The resulting algorithms can be thought of as generalizations of the original Euclidean kernel methods to manifolds. In the following, we use $\mnd$, $k(., .)$, ${\mathcal{H}}$ and ${{\phi}}(x)$ to denote a manifold, a positive definite kernel defined on $\mnd \times \mnd$, the RKHS generated by $k$, and the feature vector in $\mathcal{H}$ to which $x \in \mnd$ is mapped, respectively. Although we use ${{\phi}}(x)$ for explanation purposes, following the kernel trick, it never needs to be explicitly computed in any of the algorithms.

\subsection{Classification on Manifolds}

We first consider the binary classification problem on a manifold. To this end, we propose to extend the popular Euclidean kernel SVM algorithm to manifold-valued data. Given a set of training examples ${\{(x_i, y_i)\}}_{i=1}^m$, where $x_i \in \mnd$ and the label $y_i \in \{-1, 1\}$, kernel SVM searches for a hyperplane in $\mathcal{H}$ that separates the feature vectors of the positive and negative classes with maximum margin. The class of a test point $x \in \mnd$ is determined by the position of the feature vector ${{\phi}}(x)$ in $\mathcal{H}$ relative to the separating hyperplane. Classification with kernel SVM can be done very fast, since it only requires to evaluate the kernel at the \emph{support vectors}.

The above procedure is equivalent to solving the standard kernel SVM problem with kernel matrix generated by $k$. Thus, any existing SVM software package can be utilized for training and classification. Convergence of standard SVM optimization algorithms is guaranteed, since $k$ is positive definite.
 
Kernel SVM on manifolds is much simpler to implement and less computationally demanding in both training and testing phases than the current state-of-the-art binary classification algorithms on manifolds, such as LogitBoost on a manifold~\cite{Tuzel08}, which involves iteratively combining weak learners on different tangent spaces. Weighted mean calculation in LogitBoost on a manifold involves an extremely expensive gradient descent procedure at each boosting iteration, which makes the algorithm scale poorly with the number of training samples. Furthermore, while LogitBoost learns classifiers on tangent spaces used as first order Euclidean approximations to the manifold, our approach works in a rich high dimensional feature space. As will be shown in our experiments, this yields better classification results. 

With manifold-valued data, extending the current state-of-the-art binary classification methods to multi-class classification is not straight-forward~\cite{Tosato10, CaseiroCVPR2013_Rolling}. By contrast, our manifold kernel SVM classification method can easily be extended to the multi-class case with standard one-vs-one or one-vs-all procedures.

\subsection{Feature Selection on Manifolds}
\label{sec:mkl}

We next tackle the problem of combining multiple manifold-valued descriptors via a Multiple Kernel Learning (MKL) approach. The core idea of MKL is to combine kernels computed from different descriptors (e.g., image features) to obtain a kernel that optimally separates two classes for a given classifier. Here, we follow the formulation of~\cite{Varma07} and make use of an SVM classifier. As a feature selection method, MKL has proven more effective than conventional techniques such as wrappers, filters and boosting~\cite{Varma09}.

More specifically, given training examples ${\{(x_i, y_i)\}}_1^m$, where $x_i \in \mathcal{X}$ (some nonempty set), $y_i \in \{-1,1\}$, and a set of descriptor generating functions ${\{g_j\}}_1^N$ where $g_j : \mathcal{X} \to \mnd$, we seek to learn a binary classifier $f : \mathcal{X} \to \{-1, 1\}$ by selecting and optimally combining the different descriptors generated by $g_1,\ldots, g_N$. Let ${K}^{(j)}$ be the kernel matrix generated by $g_j$  and $k$ as ${K}_{pq}^{(j)} = k(g_j(x_p),g_j(x_q))$. The combined kernel can be expressed as ${K}^{*} = \sum_j{\lambda_j}{K}^{(j)}$, where $\lambda_j \ge 0$ for all $j$ guarantees the positive definiteness of ${K}^{*}$. The weights {\boldmath$\lambda$} can be learned using a min-max optimization procedure with an $l_1$ regularizer on {\boldmath$\lambda$} to obtain a sparse combination of kernels. The algorithm has two steps in each iteration: First it solves a conventional SVM problem with {\boldmath$\lambda$}, hence ${K}^{*}$, fixed. Then it updates {\boldmath$\lambda$} with the SVM parameters fixed. These two steps are repeated until convergence. For more details, we refer the reader to \cite{Varma07} and \cite{Varma09}. Note that convergence of MKL is only guaranteed if all the kernels are positive definite, which is satisfied in this setup since $k$ is positive definite.

We also note that MKL on manifolds gives a convenient method to combine manifold-valued descriptors with Euclidean descriptors which is otherwise a difficult task due to their different geometries. 

\subsection{Dimensionality Reduction on Manifolds}

We next study the extension of kernel PCA to nonlinear dimensionality reduction on manifolds. The usual Euclidean kernel PCA has proven successful in many applications~\cite{Scholkopf98, Scholkopf2002book}. On a manifold, kernel PCA proceeds as follows: All points $x_i \in \mnd$ of a given dataset ${\{x_i\}}_{i=1}^{m}$ are mapped to feature vectors in ${\mathcal{H}}$, thus yielding the transformed set ${\{\phi(x_i)\}}_{i=1}^{m}$. The covariance matrix of this transformed set is then computed, which really amounts to computing the kernel matrix of the original data using the function $k$. An $l$-dimensional representation of the data is obtained by computing the eigenvectors of the kernel matrix. This representation can be thought of as a Euclidean representation of the original manifold-valued data. However, owing to our kernel, it was obtained by accounting for the geometry of $\mnd$.

Once the kernel matrix is calculated with $k$, implementation details of the algorithm are similar to that of the Euclidean kernel PCA algorithm. We refer the reader to~\cite{Scholkopf98} for more details on the implementation.



\subsection{Clustering on Manifolds}
\label{sec:kernelKmeans}

For clustering problems on manifolds, we propose to make use of kernel $k$-means. Kernel $k$-means maps points to a high-dimensional Hilbert space and performs $k$-means clustering in the resulting feature space~\cite{Scholkopf98}. In a manifold setting, a given dataset ${\{x_i\}}_{i=1}^{m}$, with each $x_i \in \mnd$, is clustered into a pre-defined number of groups in ${\mathcal{H}}$, such that the sum of the squared distances from each $\phi(x_i)$ to the nearest cluster center is minimized. The resulting clusters can then act as classes for the ${\{x_i\}}_{i=1}^{m}$. 

The unsupervised clustering method on Riemannian manifolds proposed in~\cite{Goh08} clusters points in a low-dimensional space after dimensionality reduction on the manifold. In contrast, our method performs clustering in a high-dimensional RKHS which, intuitively, better represents the data distribution.

\subsection{Discriminant Analysis on Manifolds}

The kernelized version of linear discriminant analysis, known as Kernel Fisher Discriminant Analysis (Kernel FDA), can also be extended to manifolds on which a positive definite kernel can be defined. Given ${\{(x_i, y_i)\}}_{i=1}^{m}$, with each $x_i \in \mnd$ having class label $y_i$, manifold kernel FDA maps each point $x_i$ on the manifold to a feature vector in $\mathcal{H}$ and finds a new basis in $\mathcal{H}$ where the class separation is maximized. The output of the algorithm is a Euclidean representation of the original manifold-valued data, but with a larger separation between class means and a smaller within-class variance. Up to $(l - 1)$ dimensions can be extracted via kernel FDA where $l$ is the number of classes. We refer the reader to~\cite{Mika1999, Scholkopf2002book} for implementation details. In Euclidean spaces, kernel FDA has become an effective pre-processing step to perform nearest-neighbor classification in the highly discriminative, reduced dimensional space.

\begin{figure}[t!]  
\vspace{-0.2cm}
  \centering
  \includegraphics[trim = 2mm 57mm 2mm 65mm, clip, width=0.5\textwidth]{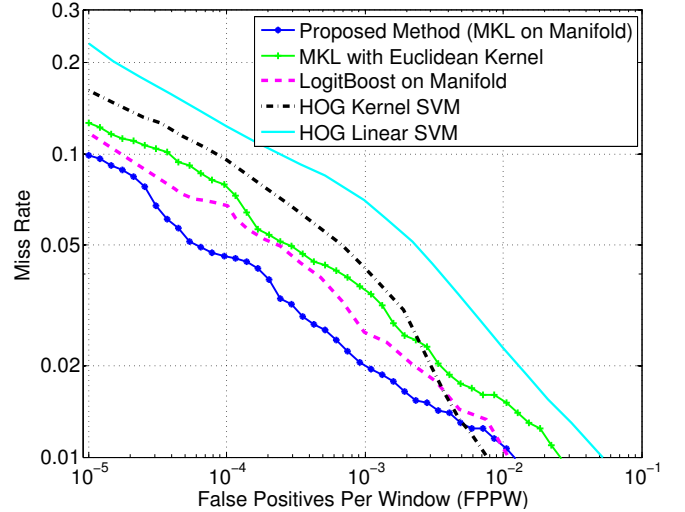}

  \caption{\small {\bf Pedestrian detection.} Detection-Error tradeoff curves for the proposed manifold MKL approach and state-of-the-art methods on the INRIA dataset. Our method outperforms existing manifold methods and Euclidean kernel methods.
  The curves for the baselines were reproduced from~\cite{Tuzel08}.}
  \label{fig:detCurves}
\vspace{-0.4cm}
\end{figure}

\section{Applications and Experiments}
We now present two series of experiments on $\symd$ and $\grsmn$ using the positive definite kernels introduced in Section~\ref{sec:kersOnManifolds} and the algorithms described in Section~\ref{sec:algos}.

\subsection{Experiments on $\symd$}
In the following, we use the log-Euclidean Gaussian kernel defined in Corollary~\ref{cor:leGK} to apply different kernel methods to $\symd$. We compare our kernel methods on $\symd$ to other state-of-the-art algorithms on Riemannian manifolds and to kernel methods on $\symd$ with the usual Euclidean Gaussian kernel that does not account for the nonlinearity of the manifold.
\subsubsection{Pedestrian Detection}

We first demonstrate the use of our log-Euclidean Gaussian kernel for the task of pedestrian detection with kernel SVM and MKL on $\symd$. 
Let ${\{(W_i, y_i)\}}_{i=1}^m$ be the training set, where each $W_i \in \mathbb{R}^{h \times w}$ is an image window and $y_i \in \{-1, 1\}$ is the class label (background or person) of $W_i$. Following~\cite{Tuzel08}, we use covariance descriptors computed from the feature vector $\left[x,\;y,\;\lvert I_x \rvert,\; \lvert I_y \rvert,\; \sqrt{I_x^2 + I_y^2},\; \lvert I_{xx} \rvert,\; \lvert I_{yy} \rvert,\;\arctan{\left(\frac{\lvert I_x \rvert }{\lvert I_y \rvert}\right)}\right]$, where $x, y$ are pixel locations and $I_x, I_y, \ldots$ are intensity derivatives. The covariance matrix for an image patch of arbitrary size therefore is an $8 \times 8$ SPD matrix. In an $h \times w$ window $W$, a large number of covariance descriptors can be computed from subwindows with different sizes and positions sampled from $W$. We consider $N$ subwindows ${\{w_j\}}_{j=1}^N$ of size ranging from $h/5 \times w/5$ to $h \times w$, positioned at all possible locations. The covariance descriptor of each subwindow is normalized using the covariance descriptor of the full window to improve robustness against illumination changes. Such covariance descriptors can be computed efficiently using integral images~\cite{Tuzel08}.

\begin{table*}[t]
    \begin{subtable}{0.3\textwidth}
        \centering
        {\renewcommand{\arraystretch}{0.7}
				\begin{tabular}{ c@{\hspace{3pt}} c@{\hspace{3pt}}  c@{\hspace{3pt}} c}
			\includegraphics[width=0.19\textwidth]{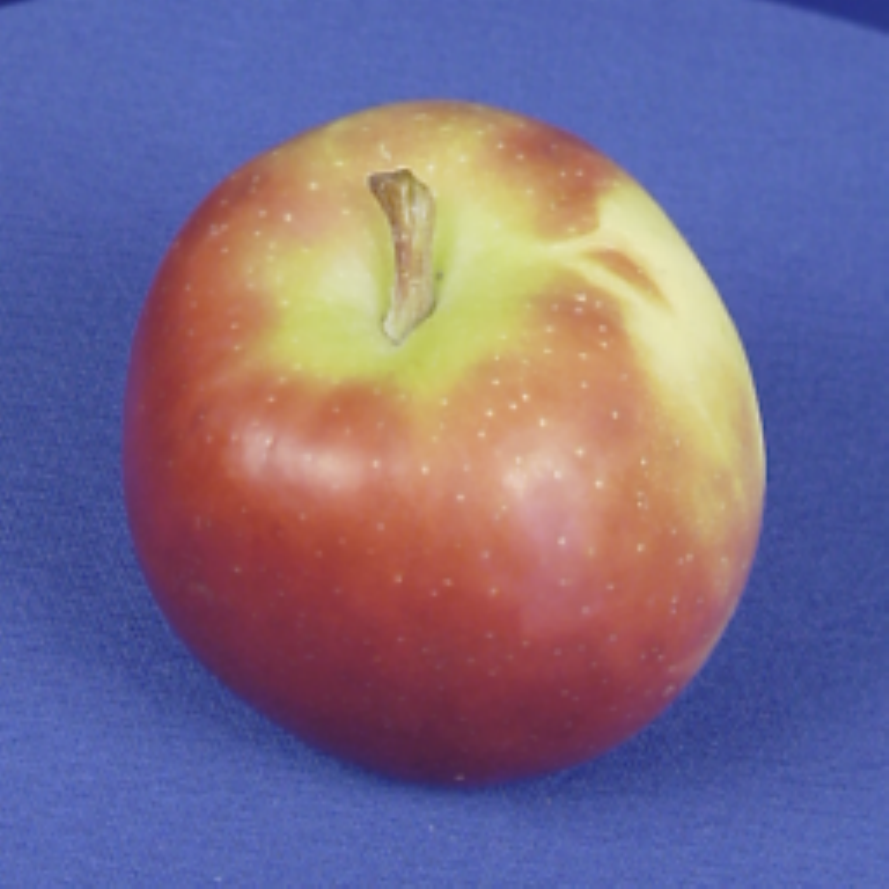}%
			& \includegraphics[width=0.19\textwidth]{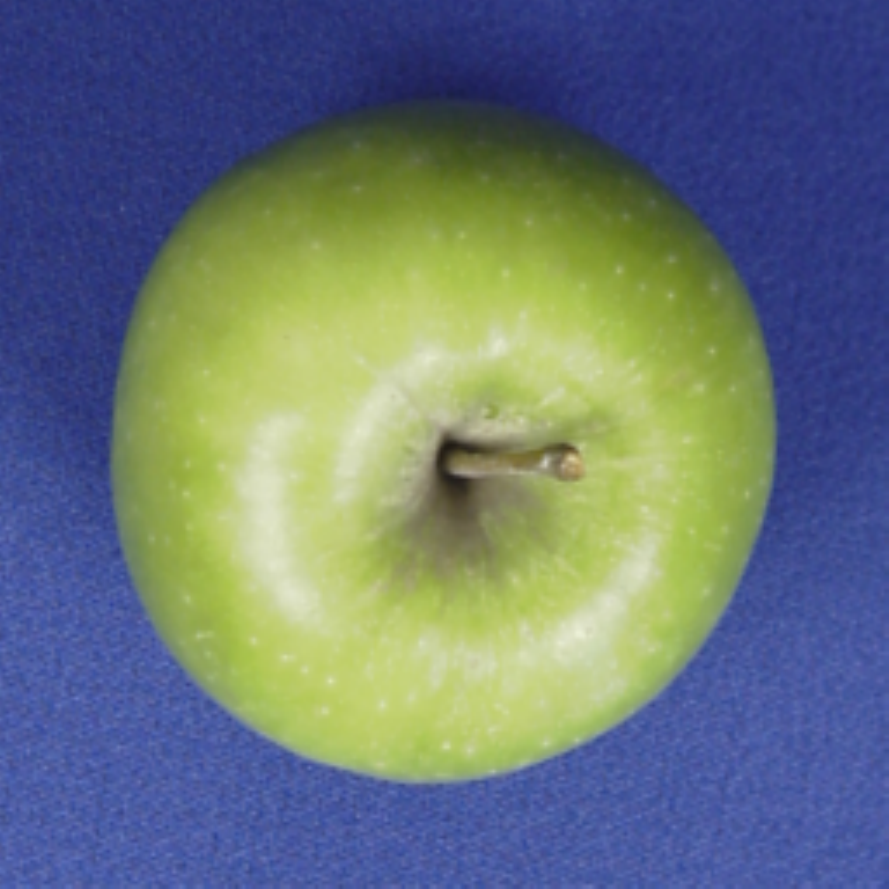}%
			& \includegraphics[width=0.19\textwidth]{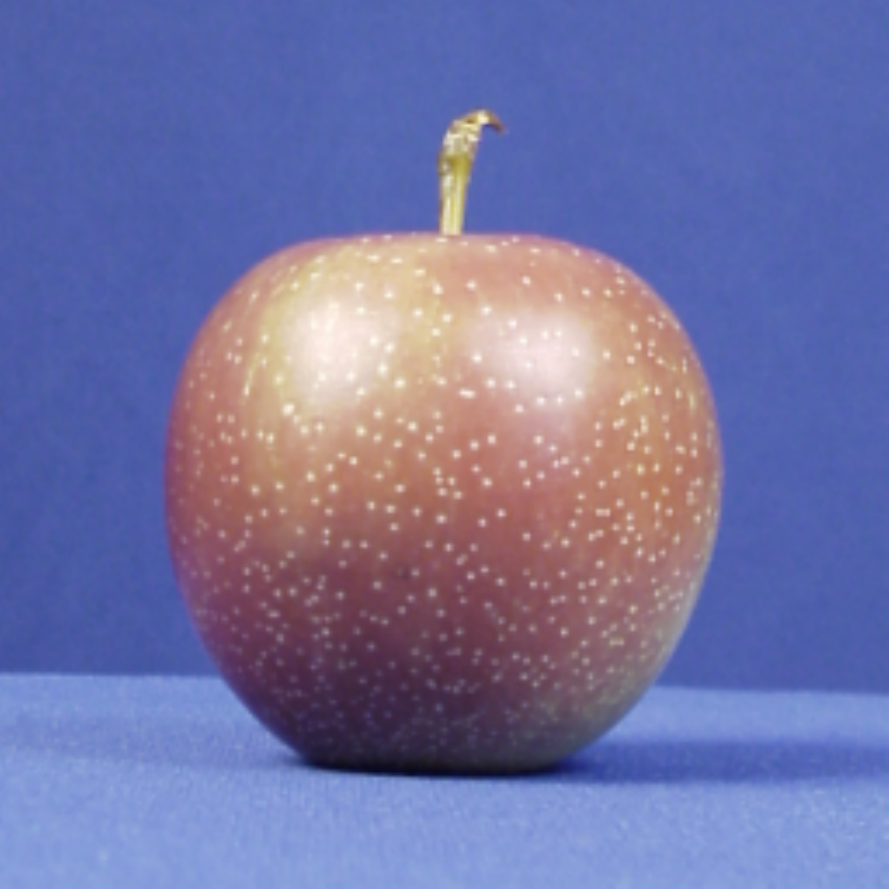}%
			& \includegraphics[width=0.19\textwidth]{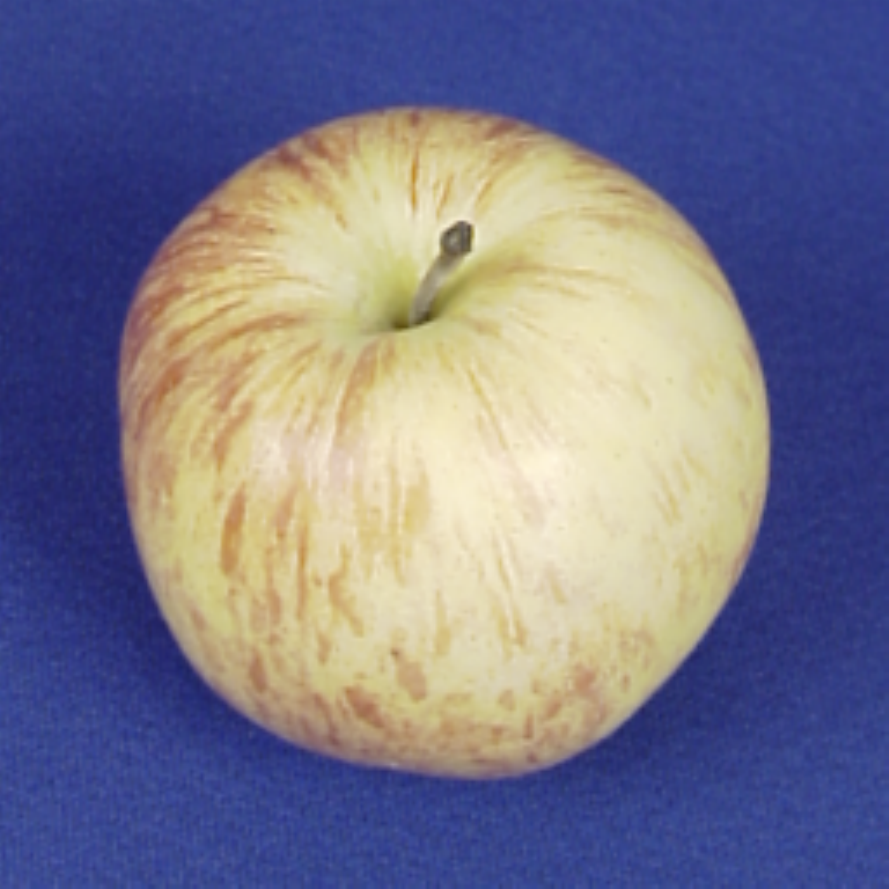}\\
					\newline
			\includegraphics[width=0.19\textwidth]{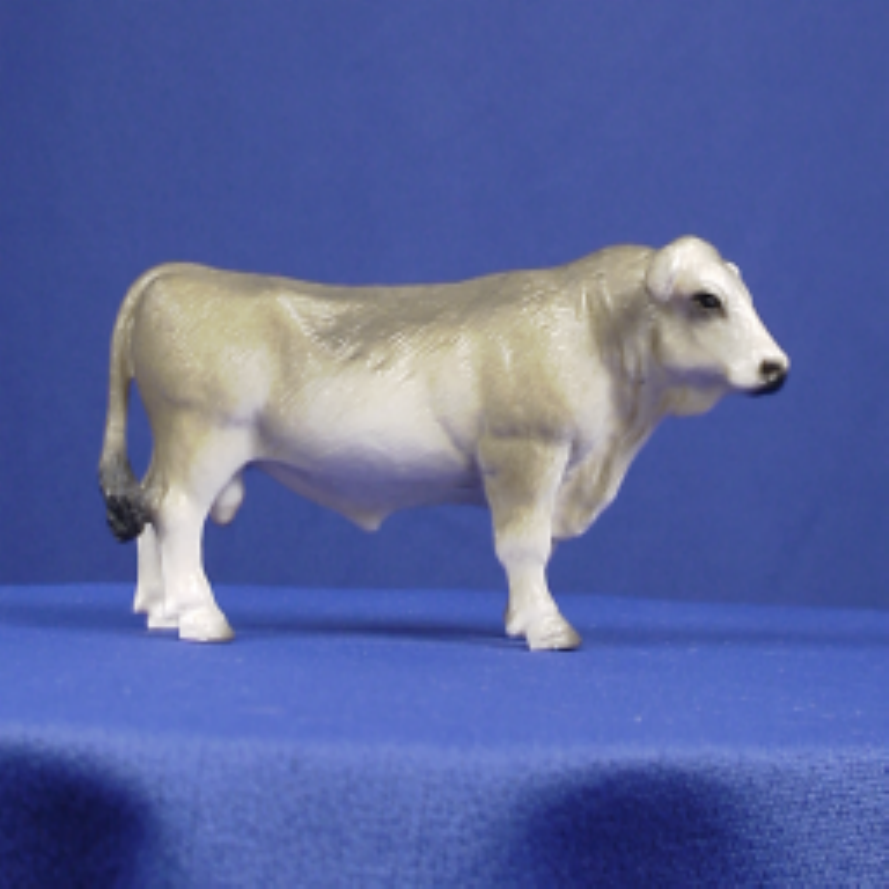}%
			& \includegraphics[width=0.19\textwidth]{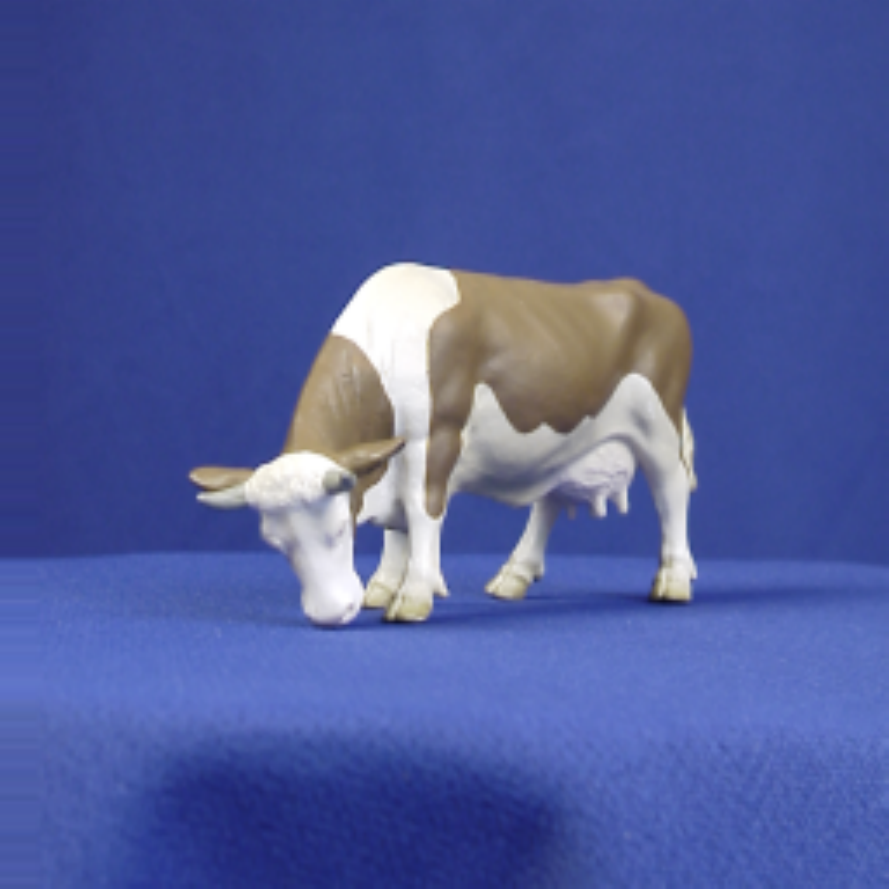}%
			& \includegraphics[width=0.19\textwidth]{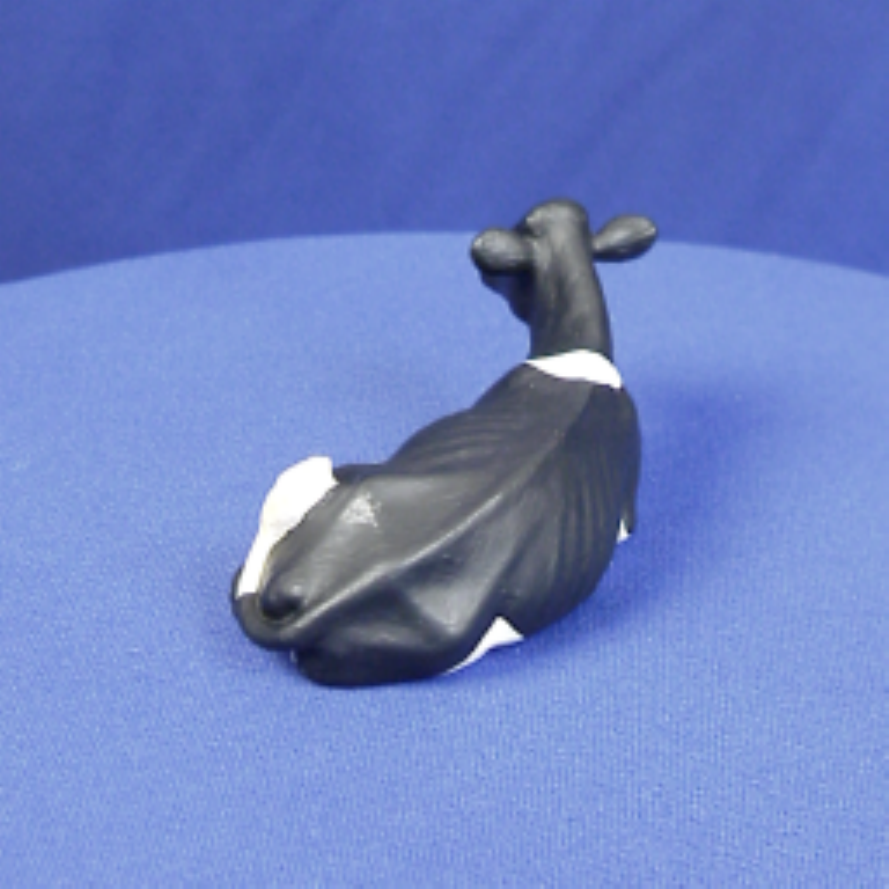}%
			& \includegraphics[width=0.19\textwidth]{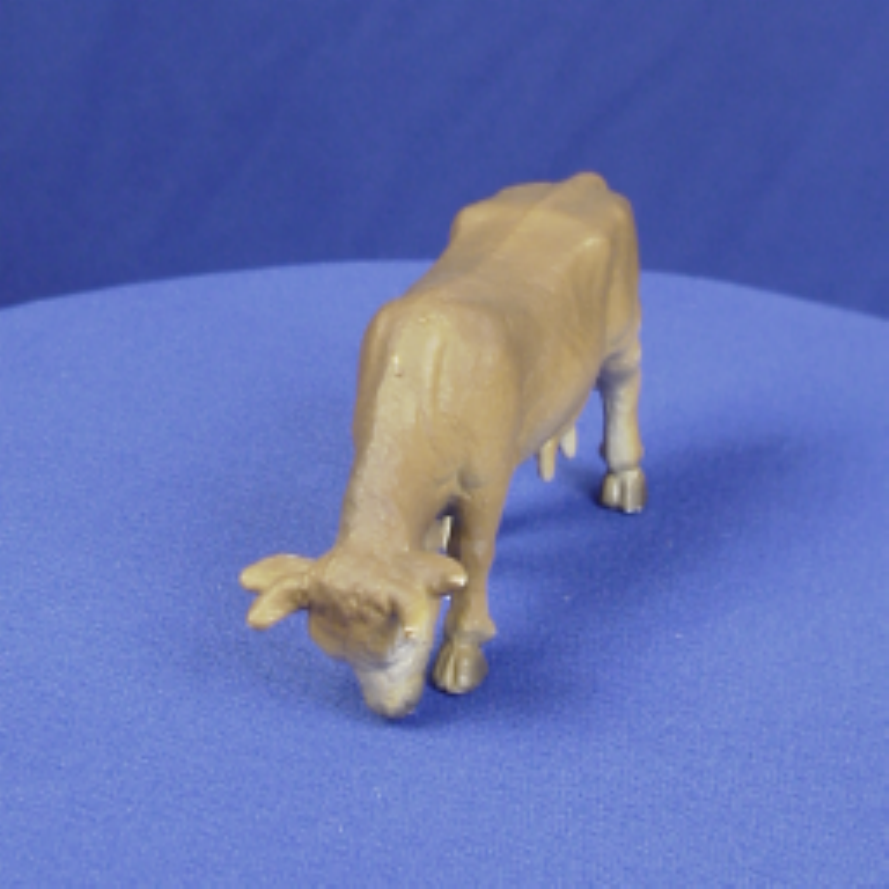}\\
				\newline
			\includegraphics[width=0.19\textwidth]{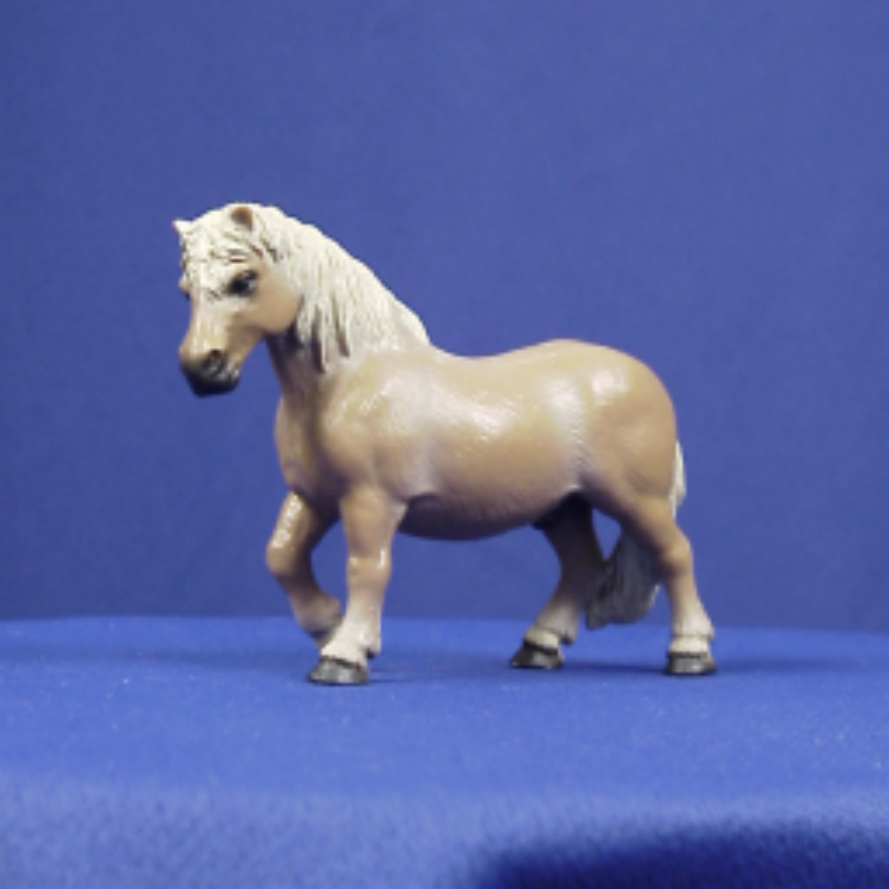}%
			& \includegraphics[width=0.19\textwidth]{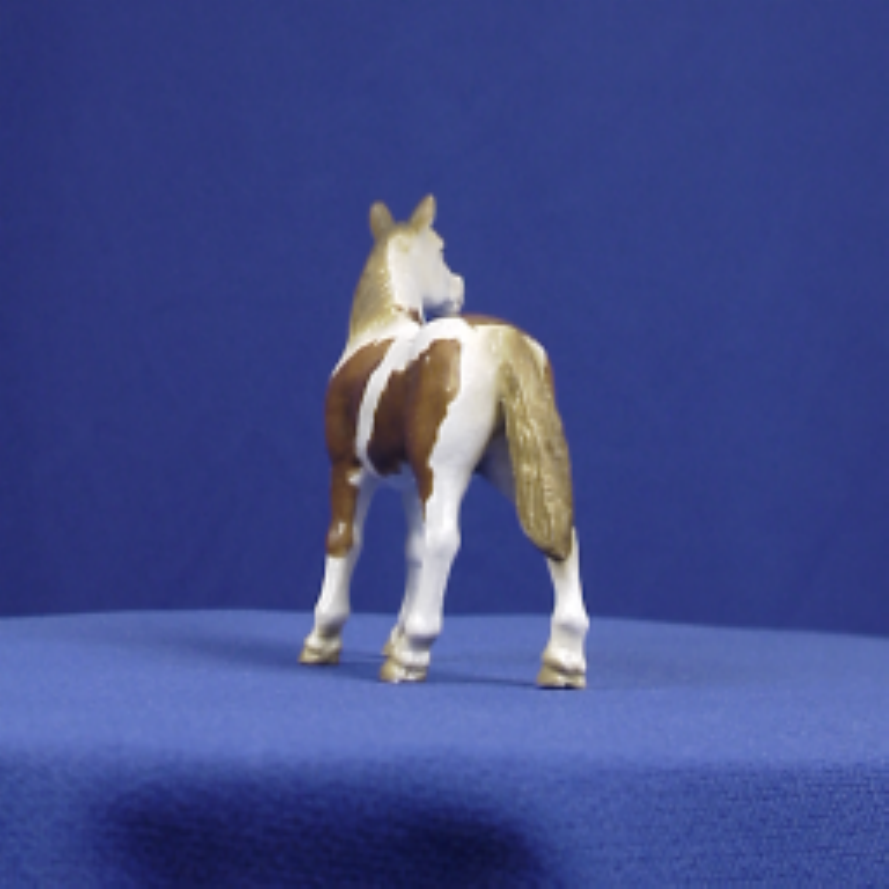}%
			& \includegraphics[width=0.19\textwidth]{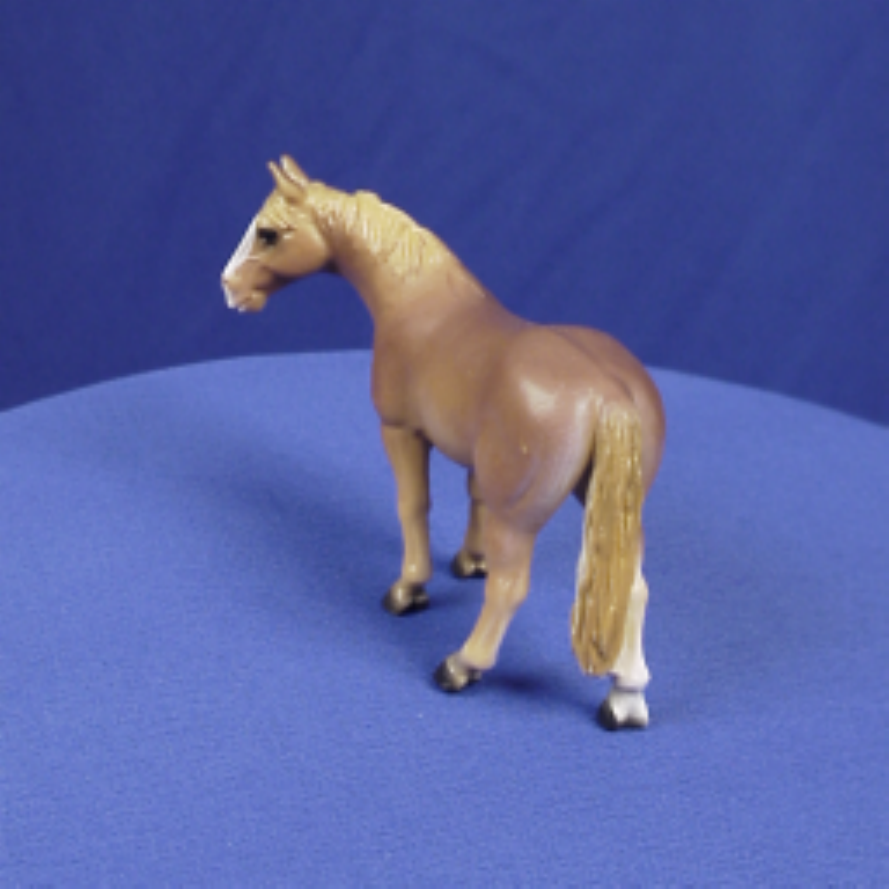}%
			& \includegraphics[width=0.19\textwidth]{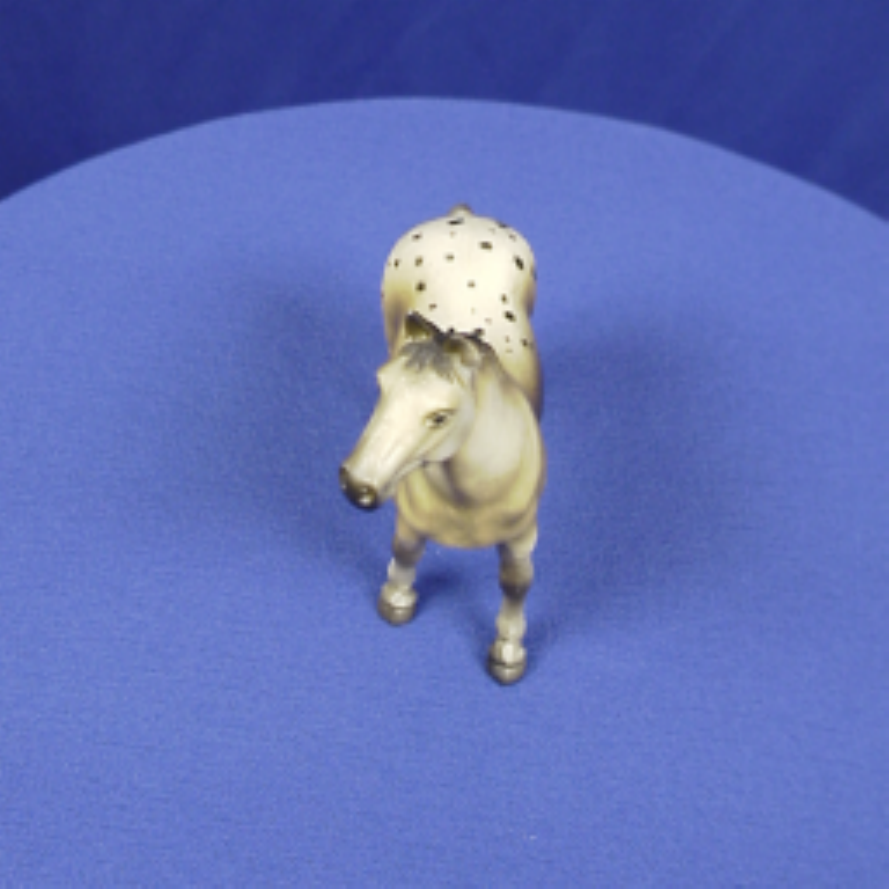}\\
				\end{tabular}				
				}
    \end{subtable}
    \begin{subtable}{0.46\textwidth}
        \centering
        {\renewcommand{\arraystretch}{1.2}
				\begin{tabular}{| >{\centering\arraybackslash} m{30pt} | p{20pt} | p{20pt} | p{20pt} | p{20pt} | p{20pt} | p{20pt} | p{20pt} | p{20pt}|}
					\hline
						\multirow{2}{*}{\vspace{-0.1cm}Nb. of\vspace{0.2cm}} & \multicolumn{2}{ @{}>{\centering}p{72pt}@{}|}{Euclidean}& \multicolumn{2}{@{}>{\centering}p{72pt}@{}|}{Cholesky} & \multicolumn{2}{@{}>{\centering}p{72pt}@{}|}{Power-Euclidean} & \multicolumn{2}{@{}>{\centering}p{72pt}@{}|}{Log-Euclidean} \\
						\cline{2-9}
						  classes & KM & KKM & KM & KKM & KM & KKM & KM & KKM\\
						\hline
						3     & 72.50 & 79.00      & 73.17 & 82.67     & 71.33 & 84.33 & 75.00 & {\bf 94.83}\\
						
						4     & 64.88 & 73.75      & 69.50 & 84.62     & 69.50 & 83.50 & 73.00 & {\bf 87.50}\\
						
						5     & 54.80 & 70.30      & 70.80 & 82.40     & 70.20 & 82.40 & 74.60 & {\bf 85.90} \\						
						6     & 50.42 & 69.00      & 59.83 & 73.58     & 59.42 & 73.17 & 66.50 & {\bf 74.50}\\						
						7     & 42.57 & 68.86      & 50.36 & 69.79     & 50.14 & 69.71 & 59.64 & {\bf 73.14}\\						
						8     & 40.19 & 68.00      & 53.81 & 69.44     & 54.62 & 68.44 & 58.31 & {\bf 71.44}\\
						\hline 
				\end{tabular}
				}
    \end{subtable}
    \vspace{-0.1cm}
    \caption{\small {\bf Object categorization.} Sample images and percentages of correct clustering on the ETH-80 dataset using $k$-means (KM) and kernel $k$-means (KKM) with different metrics on $\symd$. The proposed KKM method with the log-Euclidean metric achieves the best results in all the tests.}
    \label{tbl:ethResults}
    \vspace{-0.2cm}
\end{table*}

Let $X^{(j)}_i \in \operatorname{Sym}^+_8$ denote the covariance descriptor of the $j^{\text{th}}$ subwindow of $W_i$. To reduce this large number of descriptors, we pick the best 100 subwindows that do not mutually overlap by more than $75\%$, by ranking them according to their variance across all training samples. The rationale behind this ranking is that a good descriptor should have a low variance across all given positive detection windows. 
Since the descriptors lie on a manifold, for each $X^{(j)}$, we compute a variance-like statistic $\operatorname{var}(X^{(j)})$ across all positive training samples as
\begin{equation}
\label{eq:var}
\operatorname{var}(X^{(j)}) = \frac{1}{m_+} \sum_{i:y_i=1}{ d_g^p( X^{(j)}_i, {\bar{X}}^{(j)})}, 
\end{equation}
where $m_+$ is the number of positive training samples and $\bar{ X }$ is the Karcher mean of ${ \{ X_i \} }_{i:y_i=1}$ given by
$
\bar{X} = \exp\left({\frac{1}{m_+}\sum_{i:y_i=1}{\log(X_i)}}\right)
$
under the log-Euclidean metric. We set $p = 1$ in Eq.\eqref{eq:var} to make the statistic less sensitive to outliers. We then use the SVM-MKL framework described in Section~\ref{sec:mkl} to learn the final classifier, where each kernel is defined on one of the 100 selected subwindows. At test time, detection is achieved in a sliding window manner followed by a non-maximum suppression step.

To evaluate our approach, we made use of the INRIA person dataset~\cite{Dalal05Hog}. Its training set consists of 2,416 positive windows and 1,280 person-free negative images, and its test set of 1,237 positive windows and 453 negative images. Negative windows are generated by sampling negative images~\cite{Dalal05Hog}. We first used all positive samples and 12,800 negative samples (10 random windows from each negative image) to train an initial classifier. We used this classifier to find hard negative examples in the training images, and re-trained the classifier by adding these hard examples to the training set. Cross validation was used to determine the hyperparameters including the parameter $\gamma$ of the kernels. We used the evaluation methodology of~\cite{Dalal05Hog}. 

In Fig.~\ref{fig:detCurves}, we compare the detection-error tradeoff (DET) curves of our approach and state-of-the-art methods. The curve for our method was generated by continuously varying the decision threshold of the final MKL classifier. We also evaluated our MKL framework with the Euclidean Gaussian kernel. Note that the proposed MKL method with our Riemannian kernel outperforms MKL with the Euclidean kernel, as well as LogitBoost on the manifold. This demonstrates the importance of accounting for the nonlinearity of the manifold using an appropriate positive definite kernel. It is also worth noting that LogitBoost on the manifold is significantly more complex and harder to implement than our method.

\subsubsection{Visual Object Categorization}

We next tackle the problem of unsupervised object categorization. To this end, we used the ETH-80 dataset~\cite{Leibe03} which contains 8 categories with 10 objects each and 41 images per object. We used 21 randomly chosen images from each object to compute the parameter $\gamma$ and the rest to evaluate clustering accuracy. For each image, we used a single $5 \times 5$ covariance descriptor calculated from the features $\left[ x,\; y,\; I\;, \lvert I_x \rvert\;, \lvert I_y \rvert\right]$, where $x$, $y$ are pixel locations and $I$, $I_x$, $I_y$ are intensity and derivatives. To obtain object categories, the kernel $k$-means algorithm on $\operatorname{Sym}^+_5$ described in Section~\ref{sec:kernelKmeans} was employed.

One drawback of $k$-means and its kernel counterpart is their sensitivity to initialization. To overcome this, we ran each algorithm $20$ times with random initializations and picked the iteration that converged to the minimum sum of point-to-centroid squared distances. For kernel $k$-means on $\operatorname{Sym}^+_5$, distances in the RKHS were used. We assumed the number of clusters to be known.

To set a benchmark, we evaluated the performance of both $k$-means and kernel $k$-means on $\operatorname{Sym}^+_5$ with different metrics that generate positive definite Gaussian kernels (see Table~\ref{tbl:metricsOnSymd}). For the power-Euclidean metric, we used $\alpha = 0.5$, which, as in the non-kernel method of~\cite{Dryden09thestatistical}, we found to yield the best results. For all non-Euclidean metrics with (non-kernel) $k$-means, the Karcher mean~\cite{Dryden09thestatistical} was used to compute the centroid. The results of the different methods are summarized in Table~\ref{tbl:ethResults}. Manifold kernel $k$-means with the log-Euclidean Gaussian kernel performs significantly better than all other methods in all test cases. These results also outperform the results with the heat kernel reported in~\cite{Caseiro12}. Note, however, that~\cite{Caseiro12} only considered 3 and 4 classes without mentioning which classes were used. 

\subsubsection{Texture Recognition}
We then utilized our log-Euclidean Gaussian kernel to demonstrate the effectiveness of manifold kernel PCA on texture recognition. To this end, we used the Brodatz dataset~\cite{Randen99filtering}, which consists of $111$ different $640 \times 640$ texture images. Each image was divided into four subimages of equal size, two of which were used for training and the other two for testing.

For each training image, covariance descriptors of 50 randomly chosen $128 \times 128$ windows were computed from the feature vector $\left[I,\; \lvert I_x \rvert,\; \lvert I_y \rvert,\; \lvert I_{xx} \rvert\;, \lvert I_{yy} \rvert  \right]$~\cite{Tuzel06}. Kernel PCA on $\operatorname{Sym}^+_ 5$ with our Riemannian kernel was then used to extract the top $l$ principal directions in the RKHS, and project the training data along those directions. Given a test image, we computed 100 covariance descriptors from random windows and projected them to the $l$ principal directions obtained during training. Each such projection was classified using a majority vote over its 5 nearest-neighbors. The class of the test image was then decided by majority voting among the 100 descriptors. Cross validation on the training set was used to determine $\gamma$. For comparison purposes, we repeated the same procedure with the Euclidean Gaussian kernel. Results obtained for these kernels and different values of $l$ are presented in Table~\ref{tbl:textureResults}. The better recognition accuracy indicates that kernel PCA with the Riemannian kernel more effectively captures the information of the manifold-valued descriptors than the Euclidean kernel.

\begin{table}[t]
\centering
{\centering\arraybackslash
        \renewcommand{\arraystretch}{1.2}
\begin{tabular}{c|cccc}
\hline
\multirow{2}{*}{Kernel} & \multicolumn{4}{c}{Classification Accuracy}  \tabularnewline
& $l = 10$ & $l = 11$ & $l = 12$ & $l = 15$  \tabularnewline
\hline
Log-Euclidean & {\bf 95.50}  & {\bf 95.95} & {\bf 96.40} & \bf {96.40}  \tabularnewline
Euclidean     & 89.64 & 90.09 & 90.99 & 91.89  \tabularnewline
\hline
\end{tabular}
}

\caption{\small {\bf Texture recognition.} Recognition accuracies on the Brodatz dataset with $k$-NN in an $l$-dimensional Euclidean space obtained by kernel PCA. The log-Euclidean Gaussian kernel introduced in this paper captures information more effectively than the usual Euclidean Gaussian kernel.}
\label{tbl:textureResults}
\vspace{-0.6cm}
\end{table}

\subsubsection{Segmentation}

We now illustrate the use of our kernel to segment different types of images. First, we consider DTI segmentation, which is a key application area of algorithms on $\symd$. We utilized kernel $k$-means on $\operatorname{Sym}^+_3$ with our Riemannian kernel to segment a real DTI image of the human brain. Each pixel of the input DTI image is a $3 \times 3$ SPD matrix, which can thus directly be used as input to the algorithm. The $k$ clusters obtained by the algorithm act as classes, thus yielding a segmentation of the image.

Fig.~\ref{fig:dti} depicts the resulting segmentation along with the ellipsoid and fractional anisotropy representations of the original DTI image. We also show the results obtained by replacing the Riemannian kernel with the Euclidean one. Note that, up to some noise due to the lack of spatial smoothing, Riemannian kernel $k$-means was able to correctly segment the corpus callosum from the rest of the image. 
%

We then followed the same approach to perform 2D motion segmentation. To this end, we used a spatio-temporal structure tensor directly computed on image intensities (i.e., without extracting features such as optical flow). The spatio-temporal structure tensor for each pixel is computed as $T = K * (\nabla I \nabla I^T)$, where $\nabla I = (I_x, I_y, I_t)$ and $K *$ indicates convolution with the regular Gaussian kernel for smoothing purposes. Each pixel is thus represented as a $3 \times 3$ SPD matrix $T$ and segmentation can be performed by clustering these matrices using kernel $k$-means on $\operatorname{Sym}^+_3$.

We applied this strategy to two images taken from the Hamburg Taxi sequence. Fig.~\ref{fig:oflow} compares the results of kernel $k$-means with our log-Euclidean Gaussian kernel with the results of~\cite{Goh08} obtained by first performing LLE, LE, or HLLE on $\operatorname{Sym}^+_3$ and then clustering in the low dimensional space. Note that our approach yields a much cleaner segmentation than the baselines. This could be attributed to the fact that we perform clustering in a high dimensional feature space, whereas the baselines work in a reduced dimensional space.

\begin{figure}[t]
\vspace{-0.3cm}
\centering
\begin{tabular}{cc}
       		\includegraphics[width=0.145\textwidth, trim = 0mm 0mm 0mm 0mm, clip]{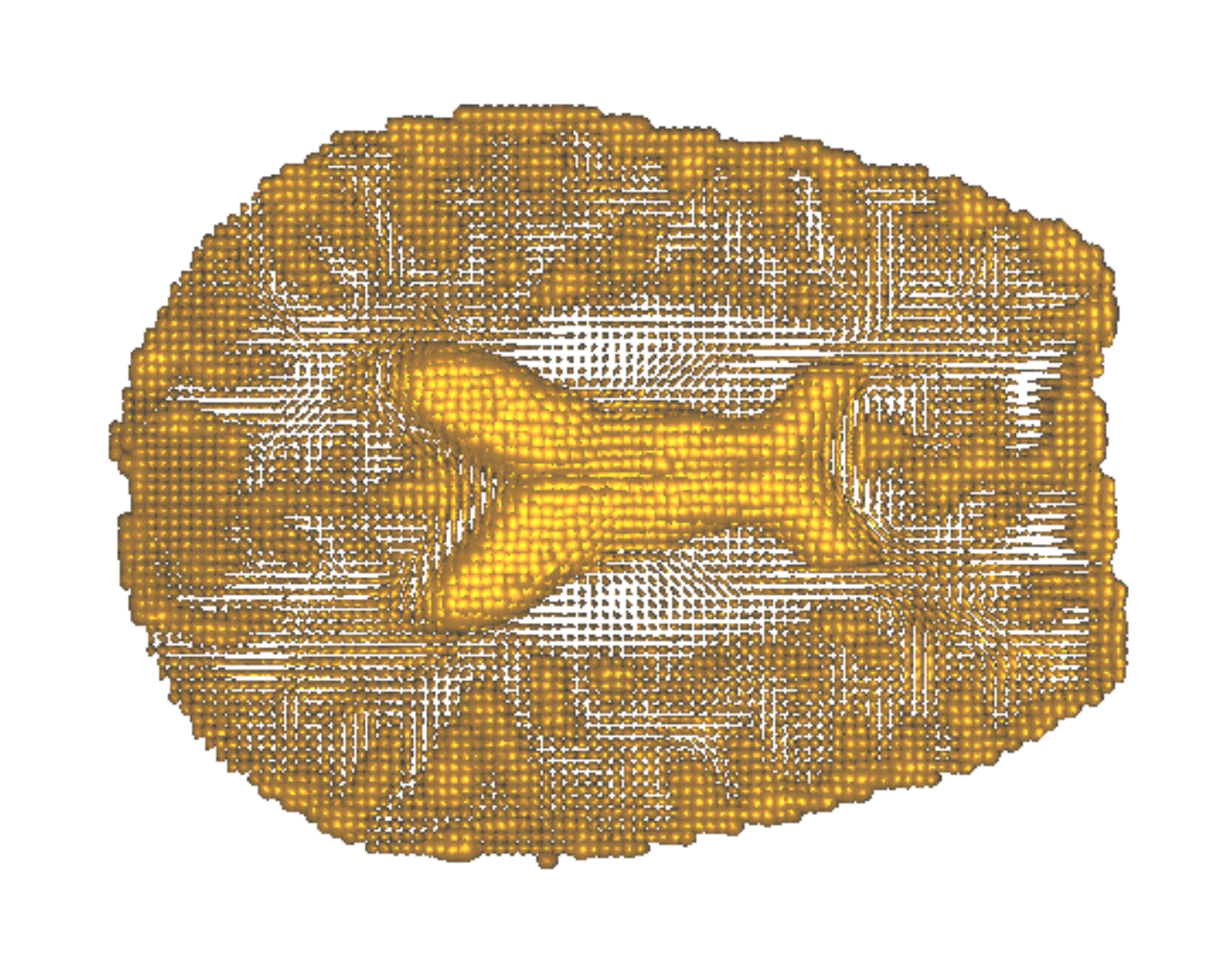} &
		\includegraphics[width=0.145\textwidth, trim = 0mm 0mm 0mm 0mm]{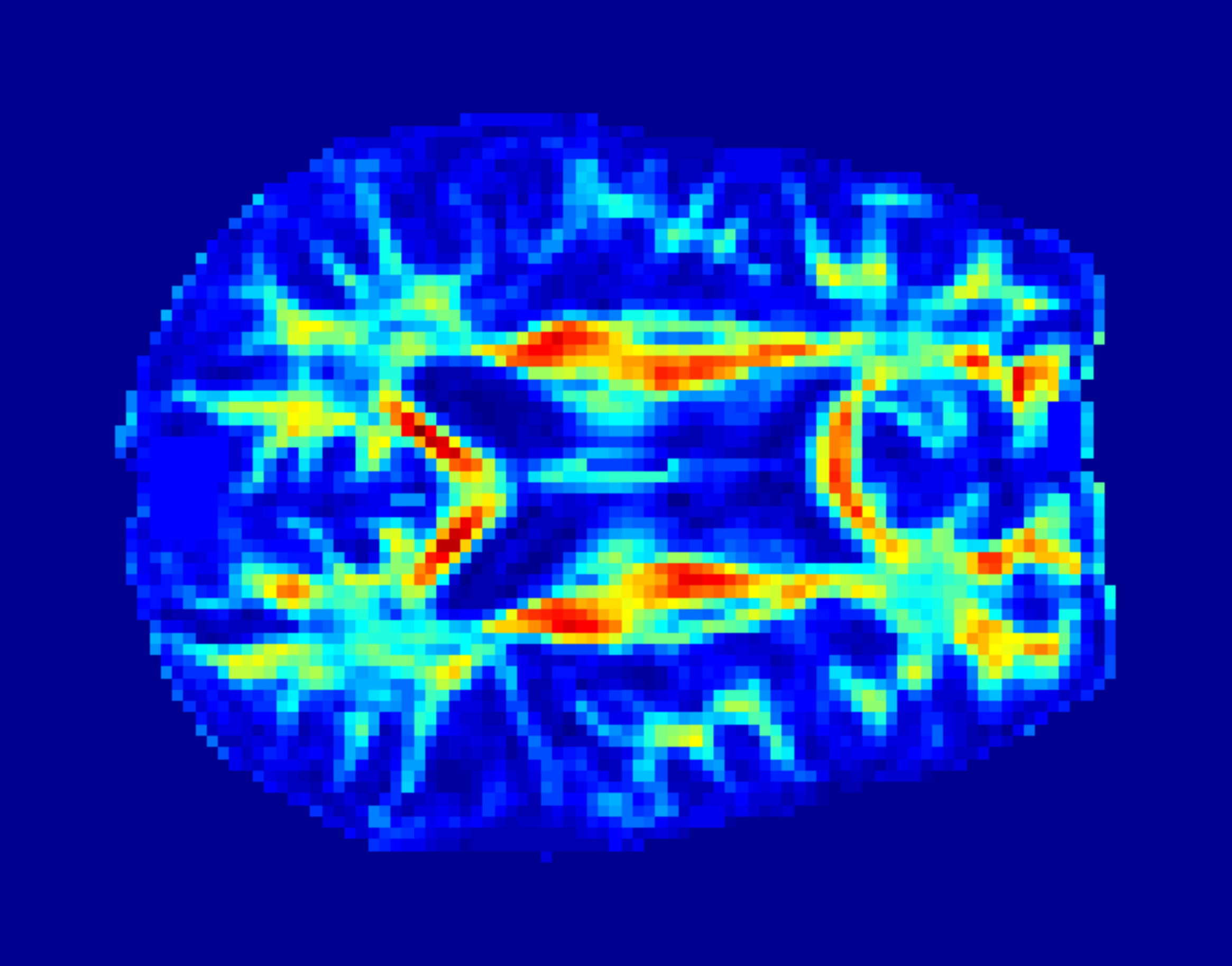} \\
                   Ellipsoids & Fractional Anisotropy \vspace{-0.00cm}\\
                	\includegraphics[width=0.145\textwidth, trim = 0mm 0mm 0mm 0mm]{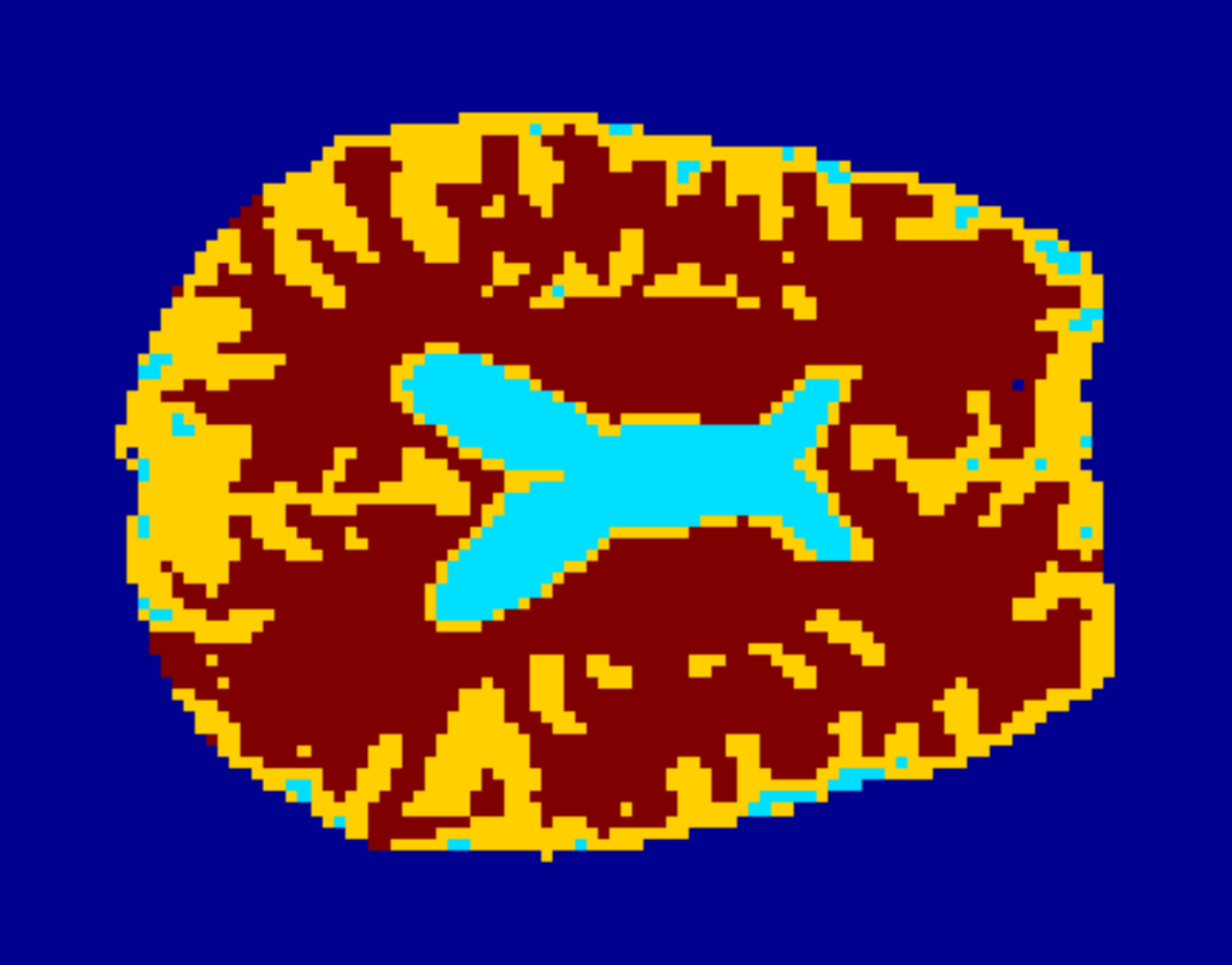} &
		\includegraphics[width=0.145\textwidth, trim = 0mm 0mm 0mm 0mm]{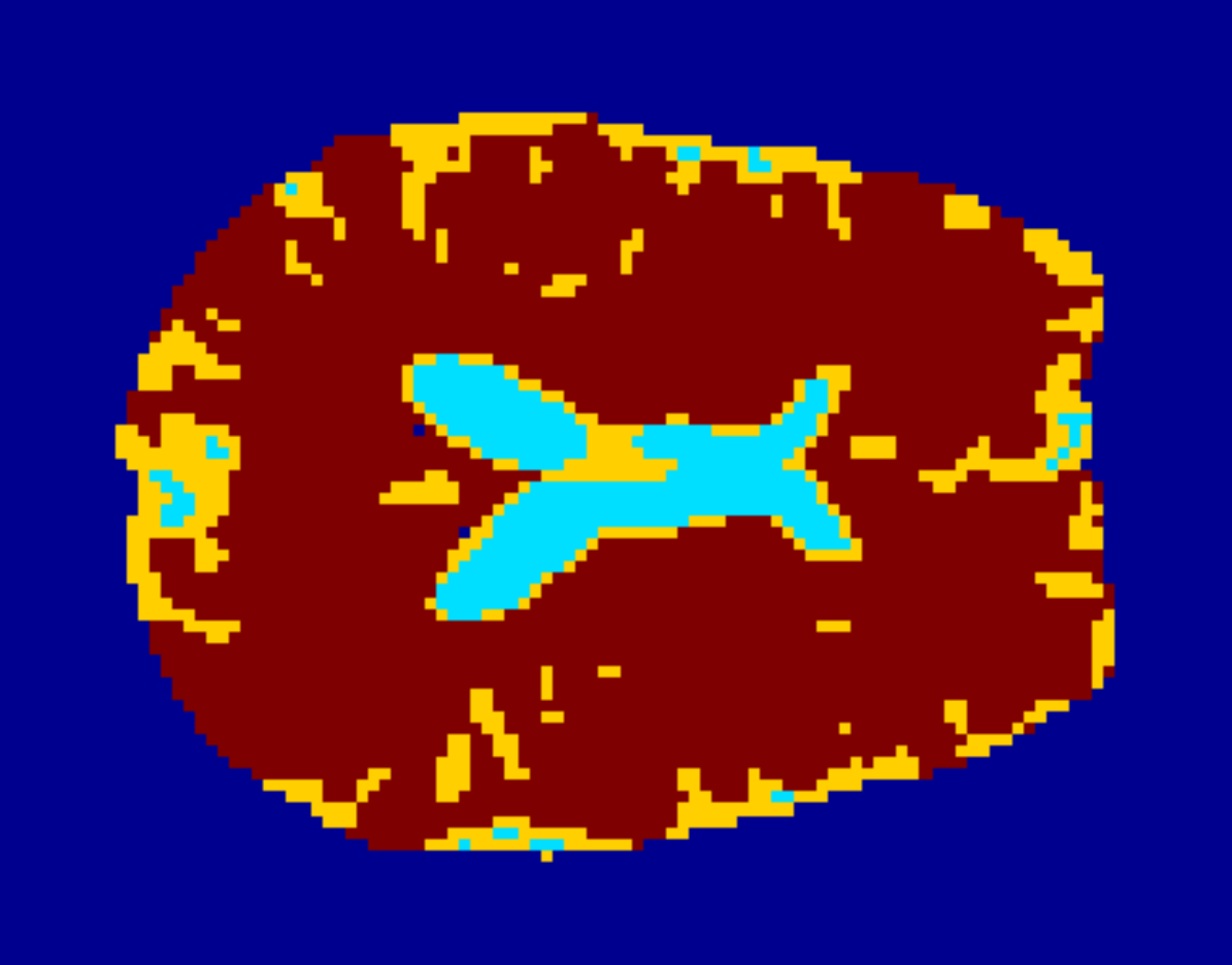} \\		
       Riemannian kernel & Euclidean kernel 
\end{tabular}
        \caption{\small {\bf DTI segmentation.} Segmentation of the corpus callosum with kernel $k$-means on $\operatorname{Sym}^+_3$. The proposed kernel yields a cleaner segmentation.}
        \label{fig:dti}
        \vspace{-0.5cm}
\end{figure}

\begin{figure}[t!]
\centering
\begin{tabular}{ccc}
               \includegraphics[width=0.11\textwidth]{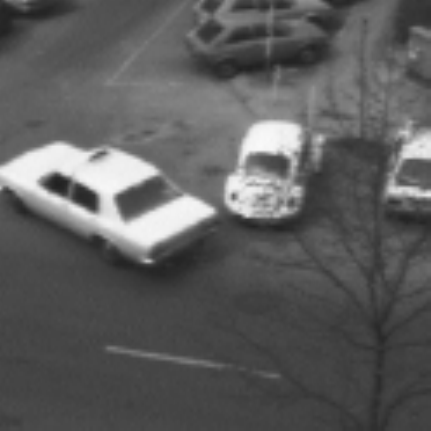} &
               \includegraphics[width=0.11\textwidth]{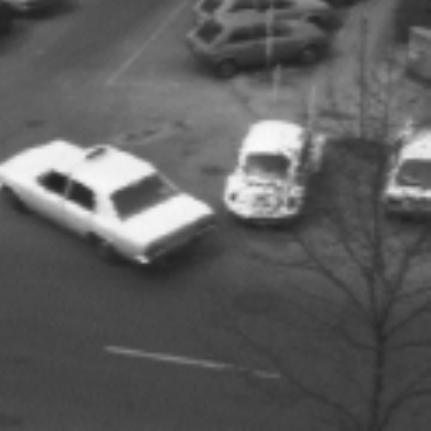} &
	        \frame{\includegraphics[width=0.11\textwidth]{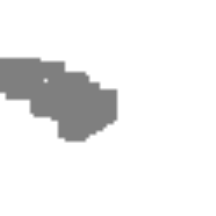}} \\
	        Frame 1 & Frame 2 & KKM on $\operatorname{Sym}^+_3$ \vspace{0.2cm}\\
                \frame{\includegraphics[width=0.11\textwidth]{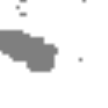}}&
                \frame{\includegraphics[width=0.11\textwidth]{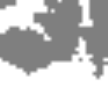}}&
                \frame{\includegraphics[width=0.11\textwidth]{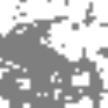}}\\
                LLE on $\operatorname{Sym}^+_3$ & LE on $\operatorname{Sym}^+_3$ & HLLE on $\operatorname{Sym}^+_3$ 
                \end{tabular}
                \vspace{-0.1cm}
        \caption{\small {\bf 2D motion segmentation.} Comparison of the segmentations obtained with kernel $k$-means with our Riemannian kernel (KKM), LLE, LE and HLLE on $\operatorname{Sym}^+_3$. Our KKM algorithm yields a much cleaner segmentation. The baseline results were reproduced from~\cite{Goh08}.}
        \label{fig:oflow}
        \vspace{-0.5cm}
\end{figure}

\vspace{-0.2cm}
\subsection{Experiments on $\grsmn$}
We now present our experimental evaluation of the proposed kernel methods on $\grsmn$ with the projection Gaussian kernel introduced in Corollary~\ref{cor:projGK}. We compare our results to those obtained with state-of-the-art classification and clustering methods on $\grsmn$, and show that we achieve significantly better classification and clustering accuracies in a number of different tasks.

In each of the following experiments, we model an image set with the linear subspace spanned by its principal components. More specifically, let ${\{\vec{f}_i\}}_{i = 1}^p$, with each $\vec{f}_i \in \Rn$, be a set of descriptors each representing one image in a set of $p$ images. The image set can then be represented by the linear subspace spanned by $r$ $(r < p, n)$ principal components of ${\{\vec{f}_i\}}_{i = 1}^p$. Limiting $r$ helps reducing noise and other fine variations within the image set, which are not useful for classification or clustering. The image set descriptors obtained in this manner are $r$-dimensional linear subspaces of the $n$-dimensional Euclidean space, which lie on the $(n, r)$ Grassmann manifold $\grsmn$. We note that the $r$-principal components of ${\{\vec{f}_i\}}_{i = 1}^p$ can be efficiently obtained by performing a Singular Value Decomposition (SVD) on the $n \times p$ matrix $F$ having the $\vec{f}_i$s as columns: If $USV^T$ is the singular value decomposition of $F$, the columns of $U$ corresponding to the largest $r$ singular values give the $r$ principal components of ${\{\vec{f}_i\}}_{i = 1}^p$.

\vspace{-0.2cm}
\subsubsection{Video Based Face Recognition}

Face recognition from video, which uses an image set for identification of a person, is a rapidly developing area in computer vision. For this task, the videos are often modeled as linear subspaces, which lie on a Grassmann manifold~\cite{Hamm08, Harandi2011}. To demonstrate the use of our projection Gaussian kernel in video based face recognition, we used the YouTube Celebrity dataset~\cite{Kim2008}, which contains 1910 video clips of 47 different people. Face recognition on this dataset is challenging since the videos are highly compressed and most of them have low resolution.

We used the cascaded face detector of \cite{Viola04} to extract face regions from videos and resized them to have a common size of $96 \times 96$. Each face image was then represented by a histogram of Local Binary Patterns~\cite{Ojala2002} having $232$ equally-spaced bins. We next represented each image set corresponding to a single video clip by a linear subspace of order 5. We randomly chose $70\%$ of the dataset for training and the remaining $30\%$ for testing. We report the classification accuracy averaged over 10 different random splits. 

We employed both kernel SVM on a manifold and kernel FDA on a manifold with our projection Gaussian kernel. With kernel SVM, a one-vs-all procedure was used for multiclass classification. For kernel FDA, the training data was projected to an $(l - 1)$ dimensional space, where $l = 47$ is the number of classes, and we used a $1$-nearest-neighbor method to predict the class of a test sample projected to the same space. We determined the hyperparameters of both methods using cross-validation on the training data.

We compared our approach with several state-of-the-art image set classification methods: Discriminant analysis of Canonical Correlations (DCC)~\cite{Kim2007}, Kernel Affine Hull Method (KAHM)~\cite{Cevikalp2010}, Grassmann Discriminant Analysis (GDA)~\cite{Hamm08}, and Graph-embedding Grassmann Discriminant Analysis (GGDA)~\cite{Harandi2011}. As shown in Table~\ref{tbl:YTC_Results}, manifold kernel SVM achieves the best accuracy. GDA uses the Grassmann projection kernel which corresponds to the linear kernel with FDA in a Euclidean space. Therefore, the results of GDA and Manifold Kernel FDA in Table~\ref{tbl:YTC_Results} also provide a nice comparison between the linear kernel and our projection Gaussian kernel. To obtain a similar comparison with SVMs, we also performed one-vs-all SVM classification with the projection kernel, which really amounts to linear SVM in the Projective space. This method is denoted by Linear SVM in Table~\ref{tbl:YTC_Results}. With both FDA and SVM, our Gaussian kernel performs better than the projection kernel, thus agreeing with the observation in Euclidean spaces that the Gaussian kernel performs better than the linear kernel.

\begin{table}[t]
\vspace{-0.3cm}
\renewcommand{\arraystretch}{1.3}
\centering
{\centering\arraybackslash
\begin{tabular}{c|> \centering p{64pt}| > \centering p{69pt}}
\hline
\multirow{2}{*}{Method} & Face Recognition Accuracy & Action Recognition Accuracy \tabularnewline
\hline
DCC~\cite{Kim2007} 			& 60.21 $\pm$ 2.9 & 41.95 $\pm$ 9.6\tabularnewline
KAHM~\cite{Cevikalp2010}	& 67.49 $\pm$ 3.5 & 70.05 $\pm$ 0.9\tabularnewline
GDA~\cite{Hamm08}			& 58.72 $\pm$ 3.0&	67.33 $\pm$ 1.1\tabularnewline
GGDA~\cite{Harandi2011}		& 61.05 $\pm$ 2.2& 73.54 $\pm$ 2.0\tabularnewline
Linear SVM					& 64.76 $\pm$ 2.1 & 74.66 $\pm$ 1.2\tabularnewline
\textbf{Manifold Kernel FDA}& 65.32 $\pm$ 1.4& 76.35 $\pm$ 1.0\tabularnewline
\textbf{Manifold Kernel SVM}&\textbf{71.78} $\pm$ \textbf{2.4} & \textbf{76.95} $\pm$ \textbf{0.9}\tabularnewline
\hline
\end{tabular}
}
\caption{\small {\bf Face and action recognition.} Recognition accuracies on the YouTube Celebrity and Ballet datasets. Our manifold kernel SVM method achieves the best results.}
\label{tbl:YTC_Results}
\vspace{-0.3cm}
\end{table}

\vspace{-0.2cm}
\subsubsection{Action Recognition}

We next demonstrate the benefits of our projection Gaussian kernel on action recognition. To this end, we used the Ballet dataset~\cite{Wang2009}, which contains 44 videos of 8 actions performed by 3 different actors. Each video contains different actions performed by the same actor. Action recognition on this dataset is challenging due to large intra-class variations in clothing and movement.

We grouped every 6 subsequent frames containing the same action, which resulted in 2338 image sets. Each frame was described by a Histogram of Oriented Gradients (HOG) descriptor~\cite{Dalal05Hog}, and 4 principal components were used to represent each image set. The samples were randomly split into two equally-sized sets to obtain training and test data. We report the average accuracy over 10 different splits. 

As in the previous experiment, we used kernel FDA and one-vs-all SVM with the projection Gaussian kernel and compared our methods with DCC, KAHM, GDA, GGDA, and Linear SVM. As evidenced by the results in Table~\ref{tbl:YTC_Results}, Manifold Kernel FDA and Manifold Kernel SVM both achieve superior performance compared to the state-of-the-art algorithms. Out of the two methods proposed in this paper, Manifold Kernel SVM achieves the highest accuracy.

\vspace{-.3cm}
\subsubsection{Pose Categorization}
Finally, we demonstrate the use of our projection Gaussian kernel in clustering on the Grassmann manifold. To this end, we used the well-known CMU-PIE face dataset~\cite{Sim2002}, which contains face images of 67 subjects with 13 different poses and 21 different illuminations. Images were closely cropped to enclose the face region followed by a resizing to $32 \times 32$. The vectorized intensity values were directly used to describe each image. We used images of the same subject with the same pose but different illuminations to form an image set, which was then represented by a linear subspace of order 6. This resulted in a total of $67 \times 13 = 871$ Grassmann descriptors, each lying on $\mathcal{G}^6_{1024}$. 

The goal of the experiment was to cluster together image sets having the same pose. We randomly divided the images of the same subject with the same pose into two equally sized sets to obtain two collections of 871 image sets. The optimum value for $\gamma$ was determined with the first collection and the results are reported on the second one. We compare our Manifold Kernel $k$-means~(MKKM) with two other algorithms on the same data. The first algorithm, proposed in \cite{Turaga11}, is the conventional $k$-means with the arc length distance on $\grsmn$ and the corresponding Karcher mean~(KM-AL). The publicly available code of \cite{Turaga11} was used to obtain the results with KM-AL. Since the Karcher mean with the arc length distance does not have a closed-form solution and has to be calculated using a gradient descent procedure, KM-AL tends to slow down with the dimensionality of the Grassmann descriptors and the number of samples. The second baseline was $k$-means with the projection metric and the corresponding Karcher mean (KM-PM). Although the Karcher mean can be calculated in closed-form in this case, the algorithm becomes slow when working with large matrices. In our setup, since the projection space consisted of symmetric matrices of size $1024 \times 1024$, projection $k$-means boils down to performing $k$-means in a $1024 \times (1024 + 1) / 2 = 524,800$ dimensional space.

The results of the three clustering algorithms for different numbers of clusters are given in Table~\ref{tbl:PIE_Results}. 
As in the experiment on $\symd$, each clustering algorithm was run $20$ times with different random initializations, and the iteration which converged to the minimum energy was picked. Note that our MKKM algorithm yields the best performance in all scenarios. Performance gain with the MKKM algorithm becomes more significant when the problem gets harder with more clusters.

\begin{table}[t]
\vspace{-0.3cm}
\renewcommand{\arraystretch}{1.3}
\centering
{\centering\arraybackslash
\begin{tabular}{c|c|c|c}
\hline
\multirow{2}{*}{Nb. of Clusters} & \multicolumn{3}{c}{Clustering Accuracy}\tabularnewline	
& KM-AL~\cite{Turaga11} & KM-PM & MKKM\tabularnewline
\hline
5 & 88.06 & 94.62 & \textbf{96.12}		\tabularnewline
6 & 85.07 & 94.52 & \textbf{95.27}		\tabularnewline
7 & 85.50 & 94.88 & \textbf{95.52}		\tabularnewline
8 & 85.63 & 90.93 & \textbf{95.34}		\tabularnewline
9 & 73.96 & 79.10 & \textbf{83.08}		\tabularnewline
10& 70.30 &78.95 & \textbf{81.79}		\tabularnewline
11& 68.38 & 78.56& \textbf{81.41}		\tabularnewline
12& 64.55 & 74.75& \textbf{81.22}		\tabularnewline
13& 61.65 & 73.82& \textbf{80.14}		\tabularnewline
\hline
\end{tabular}
}
\caption{\small {\bf Pose grouping.} Clustering accuracies on the CMU-PIE dataset. The proposed kernel $k$-means method yields the best results in all the test cases.\vspace{-0.4cm}}
\label{tbl:PIE_Results}
\end{table}

\vspace{-0.4cm}
\section{Conclusion}
\vspace{-0.1cm}
In this paper, we have introduced a unified framework to analyze the positive definiteness of the Gaussian RBF kernel defined on a manifold or a more general metric space. We have then used the same framework to derive provably positive definite kernels on the Riemannian manifold of SPD matrices and on the Grassmann manifold. These kernels were then utilized to extend popular learning algorithms designed for Euclidean spaces, such as SVM and FDA, to manifolds. Our experimental evaluation on several challenging computer vision tasks, such as pedestrian detection, object categorization, segmentation, action recognition and face recognition, has evidenced that identifying positive definite kernel functions on manifolds can be greatly beneficial when working with manifold-valued data. In the future, we intend to study this problem for other nonlinear manifolds, as well as to extend our theory to more general kernels than the Gaussian RBF kernel.

\vspace{-0.2cm}
\section*{Acknowledgements}
\vspace{-0.1cm}
This work was supported in part by an ARC grant. The authors would like thank Bob Williamson for useful discussions.


%


\ifCLASSOPTIONcaptionsoff
  \newpage
\fi



\vspace{-0.4cm}
\bibliographystyle{IEEEtran}
\bibliography{IEEEabrv,kom_bib}
%

%

\vspace{-1.2cm}
\begin{IEEEbiography}[{\includegraphics[width=1in,height=1.25in, clip,keepaspectratio]{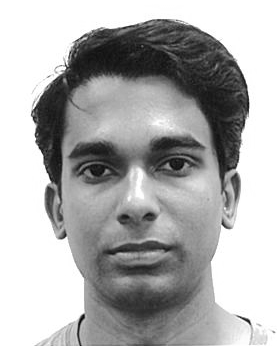}}]
{Sadeep Jayasumana}
obtained his BSc degree in Electronics and Telecommunication Engineering from University of Moratuwa, Sri Lanka, in 2010. He
is now a PhD student at the College of Engineering and Computer Science, Australian National University (ANU). He is also a member of the Computer Vision Research Group at NICTA, government funded research laboratory.
He received CSiRA Best Recognition prize at DICTA 2013. He is a student member of IEEE.
\end{IEEEbiography}
\vspace{-1.4cm}
\begin{IEEEbiography}[{\includegraphics[width=1in,height=1.25in, clip,keepaspectratio]{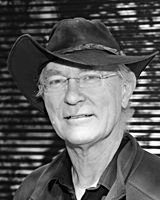}}]{Richard Hartley} is a member of the computer vision group in the Research School of Engineering, at ANU, where he has been since January, 2001. He is also a member of the computer vision research group in NICTA.
He worked at the GE Research and Development Center from 1985 to 2001, working first in VLSI design, and later in computer vision.  He became involved with Image Understanding and Scene Reconstruction working with GE's Simulation and Control Systems Division.  
He is an author (with A. Zisserman) of the book Multiple View Geometry in Computer Vision.
\end{IEEEbiography}
\vspace{-1.2cm}
\begin{IEEEbiography}[{\includegraphics[width=1in,height=1.25in,clip,keepaspectratio]{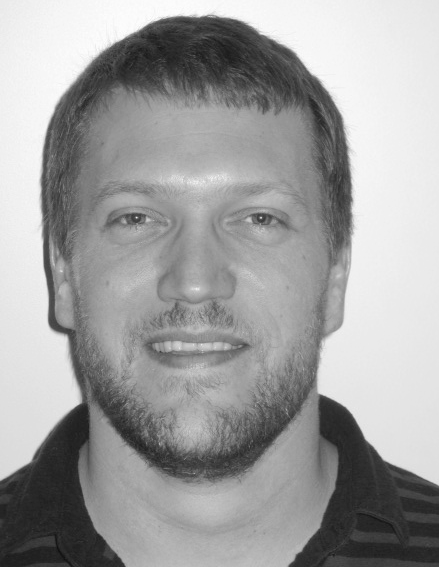}}]{Mathieu
Salzmann}
obtained his MSc and PhD degrees from EPFL in 2004 and 2009,
respectively. He then joined the International Computer Science Institute and the EECS Department at the
University of California at Berkeley as a postdoctoral fellow, and later the
Toyota Technical Institute at Chicago as a research assistant professor. He is now a senior researcher at NICTA in Canberra.
\end{IEEEbiography}
\vspace{-1.2cm}
\begin{IEEEbiography}[{\includegraphics[width=1in,height=1.25in,clip,keepaspectratio]{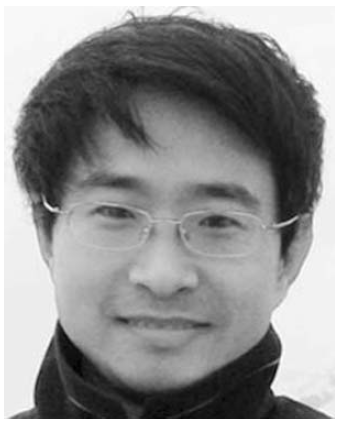}}]{Hongdong Li} is with the computer vision group at ANU. His research interests include 3D Computer Vision, vision geometry, image and pattern recognition, and mathematical optimization. Prior to 2010 he was a Senior Research Scientist with the NICTA, and a Fellow at the RSISE, ANU. He is a recipient of the CVPR Best Paper Award in 2012, Best Student Paper Award at ICPR 2010, CSiRA Best Recognition Paper Prize at DICTA 2013. He was in Program Committees (Area Chair/Reviewer) for recent ICCV, CVPR, and ECCV. He is a member of ARC Centre of Excellence for Robotic Vision, a former member of the ''Bionic Vision Australia''.
\end{IEEEbiography}
\vspace{-1.2cm}
\begin{IEEEbiography}[{\includegraphics[width=1in,height=1.25in,clip,keepaspectratio]{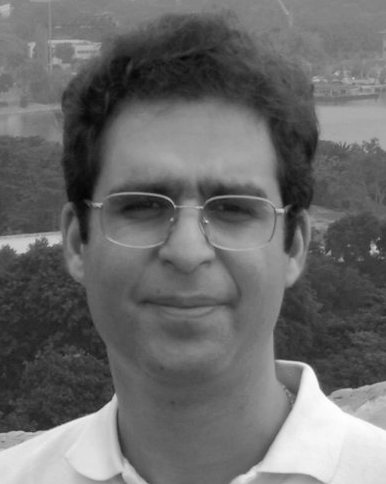}}]{Mehrtash Harandi} received the BSc in Electronics from Sharif University of technology, MSc and PhD degrees in Computer Science from the University of Tehran, Iran. Dr. Harandi is a senior researcher at Computer Vision Research Group (CVRG), NICTA. His main research interests are theoretical and computational methods in computer vision and machine learning with a focus on Riemannian geometry.

\end{IEEEbiography}


\end{document}